\newtheorem*{rep@theorem}{\rep@title}
\newcommand{\newreptheorem}[2]{%
\newenvironment{rep#1}[1]{%
 \def\rep@title{#2 \ref{##1}}%
 \begin{rep@theorem}}%
 {\end{rep@theorem}}}
\newtheorem{theorem}{Theorem}
\newtheorem{lemma}[theorem]{Lemma}
\newtheorem{prop}{Proposition}
\def\real{\mathbb{R}}
\def\E{\mathbb{E}}
\def\localupdaterule{\mathrm{Local\_Update}}
\def\dataset{{\mathcal{D}}}
\def\localdataset{\dataset_c}
\def\clients{{\mathcal{C}}}
\def\model{\theta}
\def\messages{{\mathcal{M}}} 
\def\roundsclient{{\mathcal{T}_c}} 
\def\globalmodel{\theta^*}
\def\localmodel{\theta_c^*}
\def\globallossfunction{{\mathcal{L}}}
\def\locallossfunction{{\mathcal{L}_c}}
\def\gradient{\mathbf{g}}
\def\cc{\mathbf{c}}
\def\locallossfunctioncc{{\mathcal{L}_\cc}}
\def\localmodelcc{\theta_\cc^*}
\def\x{\mathbf{x}_c}
\def\y{\mathbf{y}_c}
\def\h{\mathbf{H}}
\def\i{\mathbf{I}}
\def\w{\mathbf{W}}
\def\v{\mathbf{V}}
\def\a{\mathbf{A}}
\def\u{\mathbf{U}}
\def\e{\mathbf{\epsilon}}
\def\ga{\mathbf{\gamma}}
\def\var{\mathbb{V}\mathrm{ar}}
\def\s{\mathbf{s}_c}
\def\P{\mathbf{P}}
\def\ei{\mathbf{e}(\model)}
\def\e{\mathbf{e}}
\DeclareMathOperator*{\argmax}{argmax}
\DeclareMathOperator*{\argmin}{argmin}
\def\BibTeX{{\rm B\kern-.05em{\sc i\kern-.025em b}\kern-.08em
    T\kern-.1667em\lower.7ex\hbox{E}\kern-.125emX}}
\title{Attribute Inference Attacks for Federated Regression Tasks}
\author {
    Francesco Diana\textsuperscript{\rm 1, \rm 2},
    Othmane Marfoq\textsuperscript{\rm 3},
    Chuan Xu\textsuperscript{\rm 1, \rm 2, \rm 4, \rm 5},
    Giovanni Neglia\textsuperscript{\rm 1, \rm 2},
    Frédéric Giroire\textsuperscript{\rm 1, \rm 2, \rm 4, \rm 5},
    Eoin Thomas\textsuperscript{\rm 6}
}
\begin{document}

\maketitle


\begin{abstract}
    Federated Learning (FL) enables multiple clients, such as mobile phones and IoT devices, to collaboratively train a global machine learning model while keeping their data localized. However, recent studies have revealed that the training phase of FL is vulnerable to reconstruction attacks, such as attribute inference attacks (AIA), where adversaries exploit  exchanged messages and auxiliary public information to uncover sensitive attributes of targeted clients. While these attacks have been extensively studied in the context of classification tasks, their impact on regression tasks remains largely unexplored. In this paper, we address this gap by proposing novel model-based AIAs specifically designed for regression tasks in FL environments. Our approach considers scenarios where adversaries can either eavesdrop on exchanged messages or directly interfere with the training process. We benchmark our proposed attacks against state-of-the-art methods using real-world datasets. The results demonstrate a significant increase in reconstruction accuracy, particularly in heterogeneous client datasets, a common scenario in FL. The efficacy of our model-based AIAs makes them better candidates for empirically quantifying privacy leakage for federated regression tasks.
\end{abstract}

\begin{links}
\link{Code}{https://github.com/francescodiana99/fedkit-learn}
\end{links}


\section{Introduction}

Federated learning (FL) enables multiple clients to collaboratively train a global model~\cite{mcmahan2017communication, lian17, li2018federated}. 
Since clients' data is not collected by a third party, FL naturally offers a certain level of privacy. Nevertheless, FL alone does not provide formal privacy guarantees, and recent works have demonstrated that clients' private information can still be easily leaked~\cite{surveyAttacks, liu2022threats}.
For instance, an adversary with access to the exchanged messages and knowledge of some public information (e.g., client's provided ratings)~\cite{attribute_fl_workshop, attribute_fl, feng2021attribute} can reconstruct a client's sensitive attributes (e.g., gender/religion) in an attack known as attribute inference attack (AIA).
Additionally, the adversary can reconstruct client's training samples such as images~\cite{geiping2020inverting, yin2021see}.

\begin{figure}[t]
    \centering
    \includegraphics[scale=0.2]{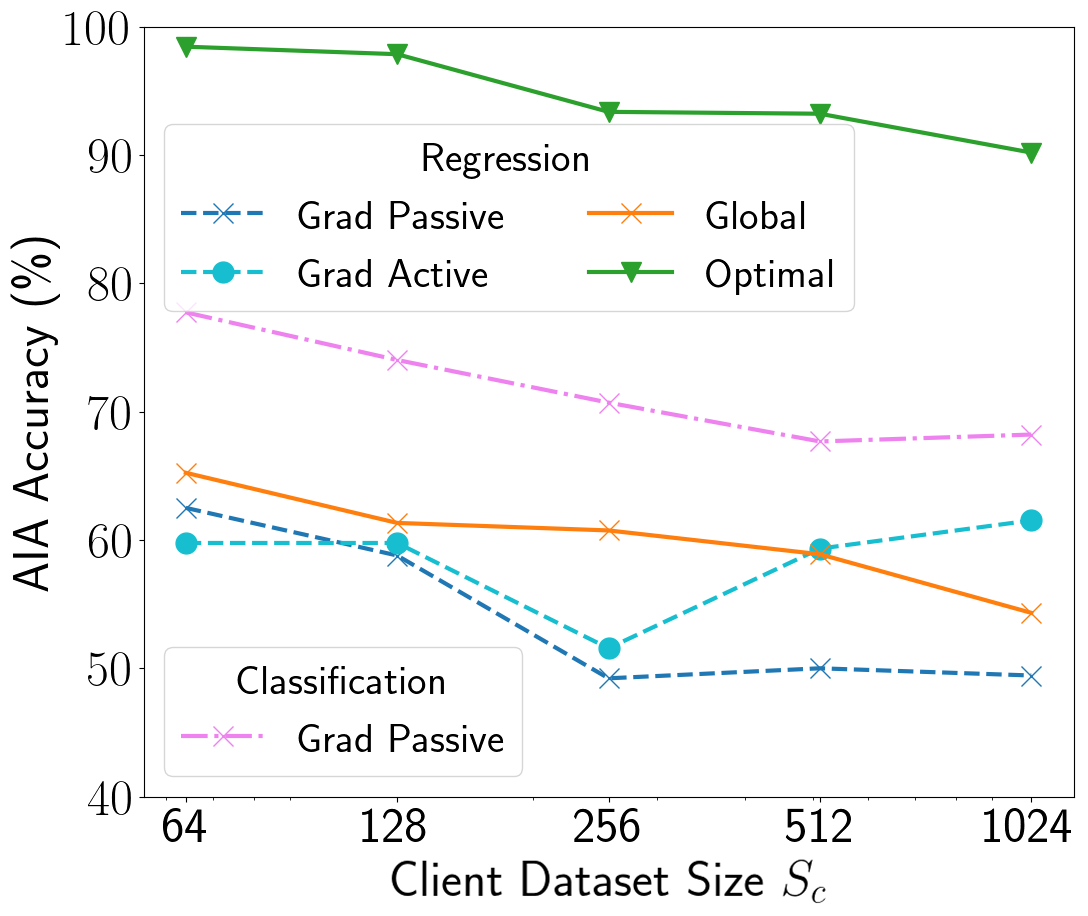}
    \caption{Average performance of different AIAs
    when four clients train a neural network through FedAvg with 1 local epoch and batch size 32. Each client stores $S_c$ data points randomly sampled from ACS Income dataset~\cite{income_dataset}. The adversary infers the gender attribute of every data sample held by the client given access to the released (public) information.}
    \label{fig:gradient_vs_model}
\end{figure}

However, these reconstruction attacks for FL have primarily been tested on classification tasks and have not been explored for \textit{regression} tasks, which are, needless to say, equally important for practical applications. 
Quite surprisingly, our experiments, as shown in Fig.~\ref{fig:gradient_vs_model}, demonstrate that the accuracy of state-of-the-art gradient-based AIA under a honest-but-curious adversary~\cite{attribute_fl_workshop, attribute_fl} (referred to as passive) drops significantly from $71\%$ on a classification task to $50\%$ (random guess) on a regression task once the targeted client holds more than 256 data points.
Furthermore,  even a more powerful (active) adversary capable of forging the messages to the targeted client to extract more information~\cite{attribute_fl_workshop, attribute_fl} offers only limited improvement to the AIA performance on regression tasks.
Detailed information about this experiment is in Appendix~\ref{app:figure1}. 

In this paper, we show that federated training of regression tasks does not inherently enjoy higher privacy, but it is simply more vulnerable to other forms of attacks. While existing FL AIA attacks for classification tasks are gradient-based (see Sec.~\ref{subsec:gradient-based}), we show that model-based AIAs---initially proposed for centralized training~\cite{fredrikson2014privacy, kasiviswanathan2013power, yeom2018privacy}---may be more effective for regression tasks.  Figure~\ref{fig:gradient_vs_model} illustrates that a  model-based attack on the server's global model (i.e., the final model trained through FL) already performs at least as well as the SOTA gradient-based passive attack. Moreover, it highlights that even more powerful attacks (up to 30~p.p.~more accurate) could be launched if the adversary had access to the optimal local model of the targeted client (i.e., a model trained only on the client's dataset).

Motivated by these observations, we propose a new two-step model-based AIA for \emph{federated regression tasks}. In this attack, the adversary first (approximately) reconstructs the client's optimal local model and then applies an existing model-based AIA to that model.

Our main contributions can be summarized as follows:
\begin{itemize}
    \item  We provide an analytical lower bound for model-based AIA accuracy in the least squares regression problem. This result motivates the adversary's strategy to approximate the client's optimal local model in federated regression tasks (Sec.~\ref{sec:model-based-guarantees}).
    \item   We propose methods for approximating optimal local models where adversaries can either eavesdrop on exchanged messages or directly interfere with the training process (Sec.~\ref{sec:lmra}).  
    \item Our experiments show that our model-based AIAs are better candidates for empirically quantifying privacy leakage for federated regression tasks  (Sec.~\ref{sec:exp}).
\end{itemize}


\section{Preliminaries}\label{sec:prelimi}
\subsection{Federated Learning}
We denote by $\clients$ the set of all clients participating to FL. 
Let $\localdataset = \{(\x(i), y_c(i)), i=1,..., S_c\}$ denote the local dataset of client $c\in \clients$ with size $S_c\triangleq |\localdataset|$.  Each data sample $(\x(i), y_c(i))$ is a pair consisting of an input $\x(i)$ and of an associated target value $y_c(i)$.
In FL, clients cooperate to learn a global model, which minimizes the following empirical risk over all the data owned by clients:
\begin{align}
\label{eq:globalobjective}
\min_{\model \in \real^d} \globallossfunction(\model) 
&= \sum_{c \in \clients} p_c \locallossfunction(\model, \dataset_c) \notag \\
&= \sum_{c \in \clients} p_c \left( \frac{1}{S_c} \sum_{i=1}^{S_c} \ell(\model, \x(i), y_c(i)) \right).
\end{align}
where $\ell(\model, \x(i), y_c(i))$ measures the loss of the model $\model$ on the sample $(\x(i), y_c(i))\in \localdataset$ and $p_c$ is the positive weight of client $c$, s.t.,~$\sum_{c\in \clients} p_c = 1$. Common choices of weights are $p_c =\frac{1}{|\clients|}$ or $p_c =\frac{S_c}{\sum_{c\in \clients}S_c}$.

\begin{algorithm}[t]
\caption{FL Framework}\label{algo:fl}
\textbf{Output}: $\model^T$
\begin{algorithmic}[1]

 \item[Server $s$:]
    \FOR{$t\in \{0,...,T-1\}$}
        \STATE{$s$ selects a subset of the clients $\clients^t\subseteq \clients$, }\label{algoline:select_client}
        \STATE{$s$ broadcasts the global model $\model^t$ to 
        $\clients^t$,} \label{algoline:broadcast}
        \STATE{$s$ waits for the updated models $\model_c^t$ 
        from every client $c\in \clients^t$,}
        \STATE{$s$ computes $\model^{t+1}$ by aggregating the received updated models.}
        \label{algoline:aggregation}
    \ENDFOR
 \item[Client $c\in \clients$: Input $\model$, Output $\model_c$]
 \WHILE{FL training is not completed}
    \STATE{$c$ listens for the arrival of new global model $\model$,}
   \STATE{$c$ updates its local model:
    $\model_c \leftarrow \localupdaterule^c (\model, \dataset_c)$\label{algoline:updaterule}}
    \STATE{$c$ sends back $\model_c$ to the server.}
 \ENDWHILE
\end{algorithmic}
\end{algorithm}

Let $\globalmodel= \arg\min_{\model \in \real^d} \globallossfunction(\model)$ be a global optimal model, i.e., a minimizer of Problem~\eqref{eq:globalobjective}. 
A general framework to learn such a global model in a federated way is shown in Alg.~\ref{algo:fl}; it generalizes a large number of FL algorithms, including FedAvg~\cite{mcmahan2017communication}, FedProx~\cite{li2018federated}, and FL with different client sampling techniques~\cite{nishio2019client, chen2020optimal, cho2020client}. The model~$\model^T$---the output of Alg.~\ref{algo:fl}---is the tentative solution of Problem~\eqref{eq:globalobjective}. Its performance depends on the specific FL algorithm, which precises how clients are selected in line~\ref{algoline:select_client}, how the updated local models are aggregated in line~\ref{algoline:aggregation}, and how the local update rule works in line~\ref{algoline:updaterule}.
For example, in FedAvg, clients are selected uniformly at random among the available clients, the local models are averaged with constant weights, and the clients perform locally multiple stochastic gradient steps \cite{mcmahan2017communication}.

\subsection{Threat Model}\label{subsection:threadmodel}

\subsubsection{Honest-but-Curious Adversary.}
We describe first an honest-but-curious adversary,\footnote{In what follows, we refer to the client using female pronouns and the adversary using male pronouns, respectively.} which is a standard threat model in existing literature~\cite{melis2019exploiting, geiping2020inverting, yin2021see, nasr2019comprehensive}, including the FL one~\cite[Table~7]{openproblems}, 
\cite{attribute_fl_workshop, attribute_fl}. This \emph{passive} adversary, who could be the server itself, is 
knowledgeable about the trained model structure, the loss function, and the training algorithm, and may eavesdrop on communication between the attacked client and the server but does not interfere with the training process. For instance, during training round $t$, the adversary can inspect the messages exchanged between the server and the attacked client (denoted by $c$), allowing him to recover the parameters of the global model $\model^t$ and the updated client model $\model_c^t(K)$. 
Let $\roundsclient \subseteq \{t|c\in\clients^t, \forall t\in {0,\dots,T-1}\}$ denote the set of communication rounds during which the adversary inspects messages exchanged between the server and the attacked client and $\messages_c = \{(\model^t, \model_c^t), \forall t\in \roundsclient\}$ denote the corresponding set of messages.

When it comes to defenses against such an adversary, it is crucial to understand that traditional measures like encrypted communications are ineffective if the attacker is the FL server.
More sophisticated cryptographic techniques like secure aggregation protocols~\cite{bonawitz_secureagg, kadhe2020fastsecagg} allow the server to aggregate local updates without having access to each individual update and, then, do hide the client's updated model $\model_c^t$ from the server. Nevertheless, they come with a significant computation overhead~\cite{quoc2020securetf} and are inefficient for sparse vector aggregation~\cite{openproblems}. Moreover, they are vulnerable to poisoning attacks, as they hinder the server from detecting (and removing) potentially harmful updates from malicious clients~\cite{blanchard2017machine, yin2018byzantine,elmhamdiRobustDistributedLearning2020}. For instance, \citet[Sec.~4.4]{truthSerum} introduce a new class of data poisoning attacks that succeed when training models with secure multiparty computation. Alternatively, Trusted Execution Environments (TEEs) ~\cite{TEE, tee2} provide an encrypted memory region to ensure the code has been executed faithfully and privately. They can then both conceal clients' updated models and defend against poisoning attacks. However, implementing a reliable TEE platform in FL remains an open challenge due to the infrastructure resource constraints and the required communication processes needed to connect verified codes~\cite{openproblems}.

\subsubsection{Malicious Adversary.} 
We also consider a stronger \emph{active} adversary who can interfere with the training process. Specifically, this adversary can modify the messages sent to the clients and have the clients update models~$\model^t$ that have been concocted to reveal more private information. 
 Let $\mathcal{T}^a_c $ be the set of rounds during which the adversary attacks client $c$ by sending a malicious model $\model^t$. 
As above, the adversary could be the server itself. This adversary has been widely considered in the literature on reconstruction attacks~\cite{fishing2022, active_sra} and membership inference attacks~\cite{active_membership, nasr2019comprehensive}. Some studies have also explored the possibility of a malicious adversary modifying the model architecture during training~\cite{robbing, zhao2023loki}, even though such attacks appear to be easily detectable. In this paper, we do not allow the adversary to modify the model architecture. 
For simplicity, we will refer to these two adversaries as passive and active adversaries, respectively, throughout the rest of the paper.

\subsection{Attribute Inference Attack (AIA) for FL}

AIA leverages public information to deduce private or sensitive attributes~\cite{kasiviswanathan2013power,fredrikson2014privacy,yeom2018privacy,attribute_fl_workshop, attribute_fl}.
For example, an AIA could reconstruct a user's gender from a recommender model by having access to the user's provided ratings.
Formally, each input $\x(i)$ of client $c$ consists of public attributes $\x^p(i)$ and of a sensitive attribute~$s_c(i)$. The target value, assumed to be public, is denoted by $y_c^p(i)$. The adversary, having access to $\{(\x^p(i), y_c^p(i)), i=1,..., S_c\}$ and $\messages_c$, aims to recover the sensitive attributes ${s_c(i)}$.\footnote{In ~\cite{fredrikson2014privacy, yeom2018privacy}, the adversary possesses additional information, including
estimates of the marginals or the joint distribution of the data samples. However, in this paper, we do not consider such a more powerful adversary.}

\subsubsection{Existing Gradient-Based AIA for FL.}\label{subsec:gradient-based} \citet{attribute_fl_workshop} and \citet{attribute_fl} present AIAs specifically designed for the FL context and both passive and active adversaries.
The central idea involves identifying sensitive attribute values that yield virtual gradients closely resembling the client's model updates---referred to as \emph{pseudo-gradients}---in terms of cosine similarity. Formally, the adversary solves the following optimization problem:
{\scriptsize \begin{equation}\label{eq:aia_gradient_based}
  \argmax_{\{s_c(i)\}_{i=1}^{S_c}} \sum_{t\in \mathcal{T}}\textbf{CosSim} \left(\frac{\partial\ell(\model^t, 
    \{(\x^p(i), s_c(i), y_c^p(i))\})}{\partial \model^t}, \model^t-\model_c^t\right), 
\end{equation}}
where $\mathcal{T}\subseteq \mathcal{T}_c$ for a passive adversary and  $\mathcal{T}\subseteq \mathcal{T}_c \cup \mathcal{T}_c^a$ for an active adversary. 
The active adversary simply sends back to the targeted client $c$ her own model $\model_c^{(t-1)}$ at each attack round in $\mathcal{T}_c^a$.

\citet{attribute_fl_workshop} and \citet{attribute_fl} assume that the sensitive attributes are categorical. 
Nevertheless, problem~\eqref{eq:aia_gradient_based} can be solved efficiently using a gradient method with the reparameterization trick and the Gumbel softmax distribution~\cite{jang2017categorical}. 
From~\eqref{eq:aia_gradient_based}, we observe that, since gradients incorporate information from all samples, the attack performance deteriorates in the presence of a large local dataset.
For example, the attack accuracy almost halves on the Genome dataset for the classification task when the client's local dataset size increases from 50 to 1000 samples~\cite[Table~8]{attribute_fl_workshop}. Our experiment (Figure~\ref{fig:gradient_vs_model}) on a regression task corroborates this finding: when the local dataset size increases from 64 to 256, the attack accuracy drops from $60\%$ to the level of random guessing.

\subsubsection{Model-Based AIA.} As an alternative, the AIA can be executed directly on the model (rather than on the model pseudo-gradients), as initially proposed for  centralized training in~\cite{kasiviswanathan2013power, fredrikson2014privacy}. Given a model $\model$, the adversary can infer the sensitive attributes by solving the following optimization problems:
\begin{equation}\label{eq:aia_model_based}
     \argmin_{s_c(i)} \ell(\model,(\x^p(i), s_c(i), y_c^p(i))),\,\,\, \forall i\in\{1,...,S_c\}, 
\end{equation}

Below we provide theoretical guarantees (Prop.~\ref{prop:attribute_nonprivate}) for the accuracy of model-based AIAs to least squares regression problems. 
Our theoretical result corroborates that, in an FL setting,  an adversary can benefit by first estimating the client’s optimal local model---that is, the model that minimizes the client’s empirical loss.


\section{Model-Based AIA Guarantees for Least Squares Regression}
\label{sec:model-based-guarantees}
In this section, we provide novel theoretical guarantees for the accuracy of the model-based AIA (Problem~\eqref{eq:aia_model_based}) in the context of least squares regression.  In particular, we show that the better the model $\model$ fits the local data and the more the sensitive attribute affects the final prediction, the higher the AIA accuracy.
 \begin{prop}\label{prop:attribute_nonprivate}
Let $E_c$ be the mean square error of a given least squares regression model $\model$ on the local dataset of client $c$ and $\model[s]$ be the model parameter corresponding to a binary sensitive attribute. The accuracy of the model-based AIA \eqref{eq:aia_model_based} is larger than or equal to $1-\frac{4E_c}{\model[s]^2}$. 
\end{prop}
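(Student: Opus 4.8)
The plan is to solve the per-sample optimization in Problem~\eqref{eq:aia_model_based} in closed form for the least squares loss, characterize exactly when the adversary's recovered bit is wrong, and then bound the number of such ``bad'' samples by a Markov-type counting argument.

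\textbf{Setup.} For a least squares model $\model$, write the prediction on sample $i$ as a function of the sensitive bit: let $a_i$ denote the prediction when the sensitive feature is set to $0$ (any bias term is folded into $a_i$), so that setting the sensitive feature to $1$ gives prediction $a_i+\model[s]$, where $\model[s]$ is the coordinate of $\model$ multiplying the binary sensitive feature. Let $e_i \triangleq a_i + \model[s]\,s_c(i) - y_c^p(i)$ be the residual of $\model$ on sample $i$ at the \emph{true} attribute value, so that $E_c = \frac{1}{S_c}\sum_{i=1}^{S_c} e_i^2$.

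\textbf{Step 1 (closed-form attack) and Step 2 (an error forces a large residual).} For fixed $i$, Problem~\eqref{eq:aia_model_based} merely compares the squared errors $(a_i-y_c^p(i))^2$ and $(a_i+\model[s]-y_c^p(i))^2$, so $\hat s_c(i)$ is whichever of the two candidate predictions $\{a_i,\ a_i+\model[s]\}$ is closer to $y_c^p(i)$ (ties broken arbitrarily). The two candidates are exactly $|\model[s]|$ apart, and the distance from $y_c^p(i)$ to the prediction at the true value is precisely $|e_i|$. Hence if $\hat s_c(i)\neq s_c(i)$, the wrong candidate is (weakly) closer to $y_c^p(i)$, which forces $y_c^p(i)$ to be at distance at least $|\model[s]|/2$ from the correct prediction, i.e. $|e_i|\ge |\model[s]|/2$, or $e_i^2 \ge \model[s]^2/4$. (Concretely: $\hat s_c(i)\neq s_c(i)$ is equivalent to $\model[s]\bigl(\model[s]+2\,\mathrm{sgn}\text{-adjusted}\,e_i\bigr)\le 0$, a one-line inequality in each of the four cases for $\mathrm{sign}(\model[s])$ and $s_c(i)\in\{0,1\}$.)

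\textbf{Step 3 (counting) and conclusion.} The number of samples the adversary misclassifies is therefore at most $\#\{i : e_i^2 \ge \model[s]^2/4\}$, which by Markov's inequality is at most $\dfrac{\sum_{i=1}^{S_c} e_i^2}{\model[s]^2/4} = \dfrac{4 S_c E_c}{\model[s]^2}$. Dividing by $S_c$, the misclassification rate is at most $4E_c/\model[s]^2$, so the accuracy of the model-based AIA is at least $1-\frac{4E_c}{\model[s]^2}$, as claimed.

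\textbf{Main obstacle.} There is no deep difficulty here; the only thing to get right is the bookkeeping in Step~2 --- the sign of $\model[s]$, the tie-breaking convention (a tie puts $e_i^2$ exactly at the threshold, so it is still counted), and the encoding of the binary attribute, since the constant $4$ is exactly the square of the factor $2$ coming from the decision threshold sitting $|\model[s]|/2$ away from the correct prediction. One should also note that the argument is purely per-sample, so it does not secretly assume the two attribute values occur equally often in $\dataset_c$.
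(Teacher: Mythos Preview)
Your proof is correct and follows essentially the same approach as the paper: both arguments show that a misclassified sample must have residual satisfying $e_i^2 \ge \model[s]^2/4$, then count such samples against the total squared error $S_c E_c$. The only cosmetic difference is that the paper phrases the attack as computing a real-valued $\tilde s_c(i) = (y_c^p(i)-a_i)/\model[s]$ and thresholding at $1/2$, whereas you directly compare the two candidate predictions; these are equivalent formulations of the same decision rule.
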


\begin{proof}
Let $\s $ be the vector including all the unknown sensitive binary attributes $\{s_c(i), \forall i\in \{1, ..., S_c\}\}$ of client $c$.
Let $\x\in \real^{S_c\times d}$ be the design matrix with rank $d$ and $\y\in \real^{S_c}$ be the labels in the local dataset $\localdataset$ of the client $c$.  Let $\model[:p]\in \real^{d-1}$ be the parameters corresponding to the public attributes. 
The adversary has access to partial data instances in $\localdataset$ which consists of the public attributes $\P\in \real^{S_c\times(d-1)}$ and the corresponding labels $\y\in \real^{S_c}$. 

The goal for the adversary is to decode the values of the binary sensitive attribute $\s \in\{0,1\}^{S_c}$ given $(\P, \y)$ by solving~\eqref{eq:aia_model_based}, i.e., checking for each point, which value for the sensitive attribute leads to a smaller loss.

It is easy to verify that the problem can be equivalently solved through the following two-step procedure.
First, the adversary computes the vector of real values:
$$
\tilde{\mathbf{s}}_c = \argmin_{\s\in \real^{S_c}} ||\P\model[:p]+ \s\model[s] - \y ||_2^2 = \frac{\y-\P\model[:p]}{\model[s]}.  
$$
Then, the adversary reconstruct the vector of sensitive features $\hat{\mathbf{s}}_c \in \{0,1\}^{S_c}$ as follows
\begin{equation}\label{eq:sensitive_prediction_condition}
\hat s_c(i) = 
\begin{cases}
    0& \text{if } \tilde{{s}}_c(i) <\frac{1}{2}\\
    1              & \text{otherwise}
\end{cases}
,\,\,\,\forall i\in \{1, ..., S_c\}. 
\end{equation}

Let $\e$ be the vector of residuals for the local dataset, i.e., $\e_c = \y - (\P\model[:p]+ \s\model[s])$. We have then 
\begin{equation}\label{eq:sensitive_relation}
    \s = \frac{\y - \P \model[:p] - \e_c}{\model[s]} = \tilde{\mathbf{s}}_c - \frac{\e_c}{\model[s]}. 
\end{equation}

Let us say that the sensitive feature of sample $i$ has been erroneously reconstructed, i.e., $s_c(i) \neq \hat s_c(i)$, then  \eqref{eq:sensitive_prediction_condition}, implies that $|s_c(i) - \tilde s_c(i)|\ge 1/2$, and from
 \eqref{eq:sensitive_relation} it follows that $|e_c(i)|^2 \ge \model[s]^2/4$. As $E_c S_c= \lVert \e_c \rVert_2^2$, there can be at most $4 E_c S_c/\model[s]^2$ samples erroneously reconstructed, from which we can conclude the result.
 \end{proof}

Proposition~\ref{prop:attribute_nonprivate} indicates that the model-based AIA performs better the more the model overfits the dataset.
This observation justifies the following two-step attack in a FL setting: 
\emph{the adversary first reconstructs the optimal local model of the targeted client $c$, i.e., $\localmodel = \arg\min_{\model\in \real^d} \locallossfunction(\model, \localdataset)$, and then execute the AIA in \eqref{eq:aia_model_based} on the  reconstructed model.}\footnote{
Note that this model is optimal in terms of the training error, but not in general of the final test error. On the contrary, it is likely to overfit the local dataset.
}
In the following, we will detail our approaches for reconstructing the optimal local model by passive and active adversary, respectively.    


\section{Reconstructing the Local Model  in FL}\label{sec:lmra}
In this section, we show how an adversary may reconstruct the optimal local model of client $c$, i.e.,~$\localmodel = \arg\min_{\model\in \real^d} \locallossfunction(\model, \localdataset)$.

First, we provide an efficient approach for least squares regression under a passive adversary. We prove that the adversary can exactly reconstruct the optimal local model under deterministic FL updates and provide probabilistic guarantees on the reconstruction error under stochastic FL updates (Sec.~\ref{sec:exact}).

Second, we show that an active adversary can potentially reconstruct any client's local model (not just least square regression ones) in a federated setting (Sec.~\ref{subsec:active}).

\subsection{Passive Approach for Linear Least Squares}\label{sec:exact}

We consider that clients cooperatively train a linear regression model with quadratic loss. We refer to this setting as a federated least squares regression.  
The attack is detailed in Alg.~\ref{algo:lineardecode} and  involves a single matrix computation (line~\ref{algoline:ols}) after the exchanged messages $\messages_c$ have been collected. $\mathbf{A}^\dagger$ represents the pseudo-inverse of matrix $\mathbf{A}$.  
Theorem~\ref{prop:stochastic_case} provides theoretical guarantees  for the distance between the reconstructed model and the optimal local one, when the model is trained through FedAvg~\cite{mcmahan2017communication} with batch size $B$ and local epochs $E$. The formal statement of the theorem and its proof are in Appendix~\ref{app:stochasticproof}.

\begin{algorithm}[t]
\caption{Reconstruction of client-$c$ local model by a passive adversary for federated least squares regression }\label{algo:lineardecode}
\textbf{Input}: 
    the server models $\model^{t_i}(0)=\model_c^{t_i}(0)$ and the local updated models  $\model_c^{t_i}(K)$ at all the inspected rounds $t_i\in  \mathcal{T}_c= \{t_1, t_2, \dots, t_{n_c}\} $.
\begin{algorithmic}[1]
\small
    \STATE{Let $\Theta_{\text{in}}=[\model_c^{t_1}(0) \,\, \model_c^{t_2}(0) \,\, ... \,\, \model_c^{t_{n_c}}(0)]^T \in \real^{n_c\times{d}}$}
    \STATE{Let ${\Theta_{\text{out}}= \left[\begin{array}{cc}(\model_c^{t_1}(0)-\model_c^{t_1}(K))^{T} & 1 \\ \vdots & \\ (\model_c^{t_{n_c}}(0)-\model_c^{t_{n_c}}(K))^{T} & 1\end{array}\right] \in \real^{n_c\times(d+1)}}$}
    \STATE{
        ${(\hat\model_c^*)^T \leftarrow \text{last row of} \left( (\Theta_{\text{out}}^T \Theta_{\text{out}})^{\dagger}\Theta^T_{\text{out}}\Theta_{\text{in}} \right)}$ 

    }\label{algoline:ols}
    
    \STATE{Return $\hat \model_c^*$ as the estimator for client $c$'s local model}
    
\end{algorithmic}
\end{algorithm}

\begin{theorem}[Informal statement]\label{prop:stochastic_case}    
Consider a federated least squares regression with a large number of clients and assume that i)~client $c$ has $d$-rank design matrix $\x \in \real^{S_c\times d}$, ii)~she updates the global model through stochastic gradient steps  with sub-Gaussian noise with scale $\sigma$, iii)~the global model is independent of previous target client updates, and iv)~the eigenvalues of the matrix $\frac{\Theta_{\text{out}}^{T} \Theta_{\text{out}}}{n_c}$ are lower bounded by $\underline{\lambda}>0$. By eavesdropping on $n_c>d$ message exchanges between client $c$ and the server, the error of the reconstructed model $\hat\model^*_c$ of Alg.~\ref{algo:lineardecode} is upper bounded w.p.  $\ge 1-\delta$  when $\eta\leq \frac{S_c}{2\lambda_{\max}(\x^T\x)}$ and
\begin{equation}\label{eq:bounds}
  \left\lVert \hat\model^*_c - \localmodel \right\rVert_2 = \mathcal{O}\left(\eta\sigma d\sqrt{  dE\left\lceil\frac{S_c}{B}\right\rceil \frac{d+1+\ln \frac{2d}{\delta}}{n_c \cdot \underline{\lambda}}} \right). 
\end{equation}
\end{theorem}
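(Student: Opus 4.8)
The plan is to reduce Algorithm~\ref{algo:lineardecode} to a noisy linear regression and then run a perturbation-plus-concentration argument. First one fixes the \emph{deterministic skeleton}: for federated least squares, one full-batch local gradient step is the affine map $\model \mapsto M\model + (I-M)\localmodel$ with $M = I - \frac{2\eta}{S_c}\x^{\top}\x$, since $\frac{2\eta}{S_c}\x^{\top}\y = (I-M)\localmodel$ (assumption~(i), that $\x$ has rank $d$, makes $\localmodel = (\x^{\top}\x)^{-1}\x^{\top}\y$ the unique minimizer). Telescoping $K = E\lceil S_c/B\rceil$ steps gives, in the noiseless case, $\model^{t_i}(0) - \model_c^{t_i}(K) = (I-M^{K})(\model^{t_i}(0) - \localmodel)$, hence $\model^{t_i}(0) = (I-M^{K})^{-1}(\model^{t_i}(0)-\model_c^{t_i}(K)) + \localmodel$. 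Stacking the $n_c$ inspected rounds, this is exactly $\Theta_{\mathrm{in}} = \Theta_{\mathrm{out}}\beta^{\star}$ with $\beta^{\star} = [\,(I-M^{K})^{-\top};\ (\localmodel)^{\top}\,]$, whose last row is $(\localmodel)^{\top}$; this explains why line~\ref{algoline:ols} is exact under deterministic updates. The step-size condition $\eta \le \frac{S_c}{2\lambda_{\max}(\x^{\top}\x)}$ ensures $0 \preceq M \prec I$, so $M^{K}$ is a contraction and $I-M^{K}$ is invertible.

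Next one introduces the \emph{stochastic perturbation}. With mini-batch SGD the $k$-th local gradient equals the full gradient plus a conditionally-centered, sub-Gaussian term $\zeta_k$ of scale $\sigma$ (assumption~(ii)); unrolling keeps the recursion affine, so $\model^{t_i}(0) - \model_c^{t_i}(K) = (I-M^{K})(\model^{t_i}(0)-\localmodel) + \eta\,\epsilon_i$ with $\epsilon_i = \sum_{k=0}^{K-1} M^{K-1-k}\zeta_k^{(i)}$, and since $\|M\|_2 < 1$ and the $\zeta_k^{(i)}$ form a martingale-difference sequence, $\epsilon_i$ is sub-Gaussian of scale $\mathcal{O}(\sigma\sqrt{K})$. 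Hence the regressor the adversary actually forms is $\Theta_{\mathrm{out}} = \Theta^{\mathrm{cln}}_{\mathrm{out}} + \eta[\mathcal{E}\ \ 0]$, where $\mathcal{E}$ has rows $\epsilon_i^{\top}$, while the response $\Theta_{\mathrm{in}}$ (the observed global models) is noiseless: an errors-in-variables problem. Assumption~(iv) forces $\Theta_{\mathrm{out}}$ to have full column rank $d+1$, so the pseudo-inverse in line~\ref{algoline:ols} is a genuine inverse; substituting $\Theta_{\mathrm{in}} = (\Theta_{\mathrm{out}} - \eta[\mathcal{E}\ \ 0])\beta^{\star}$ yields the exact identity $\widehat{\beta} - \beta^{\star} = -\eta\,(\Theta_{\mathrm{out}}^{\top}\Theta_{\mathrm{out}})^{-1}\bigl((\Theta^{\mathrm{cln}}_{\mathrm{out}})^{\top}\mathcal{E} + \eta\,\mathcal{E}^{\top}\mathcal{E}\bigr)(I-M^{K})^{-\top}$. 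Reading off the last row gives $\hat\model^{*}_c - \localmodel$, so $\|\hat\model^{*}_c - \localmodel\|_2 \le \eta\,\|(\Theta_{\mathrm{out}}^{\top}\Theta_{\mathrm{out}})^{-1}\|_2\bigl(\|(\Theta^{\mathrm{cln}}_{\mathrm{out}})^{\top}\mathcal{E}\|_2 + \eta\|\mathcal{E}\|_2^2\bigr)\|(I-M^{K})^{-1}\|_2$.

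Finally one bounds the three factors. By assumption~(iv), $\|(\Theta_{\mathrm{out}}^{\top}\Theta_{\mathrm{out}})^{-1}\|_2 \le 1/(n_c\underline{\lambda})$. For the cross term, assumption~(iii) is exactly what makes the rows of $\Theta^{\mathrm{cln}}_{\mathrm{out}}$ (conditionally) independent of the SGD noise, so $(\Theta^{\mathrm{cln}}_{\mathrm{out}})^{\top}\mathcal{E}$ is a sum of $n_c$ conditionally-centered sub-Gaussian rank-one matrices; treating its $d$ columns separately, a standard sub-Gaussian vector-norm bound together with a union bound over the columns gives, with probability $\ge 1-\delta$, $\|(\Theta^{\mathrm{cln}}_{\mathrm{out}})^{\top}\mathcal{E}\|_2 = \mathcal{O}(\sigma\sqrt{K}\,\|\Theta^{\mathrm{cln}}_{\mathrm{out}}\|_2\,\sqrt{d+1+\ln(2d/\delta)})$, the second-order term $\eta\|\mathcal{E}\|_2^2$ being lower order for $\eta$ small. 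Relating $\|\Theta^{\mathrm{cln}}_{\mathrm{out}}\|_2$ and $\|(I-M^{K})^{-1}\|_2$ to the problem constants (the extra $d$-factors coming from passing between operator and Frobenius norms) and collecting everything yields the stated $\mathcal{O}\bigl(\eta\sigma d\sqrt{dE\lceil S_c/B\rceil\,(d+1+\ln(2d/\delta))/(n_c\underline{\lambda})}\bigr)$.

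The main obstacle is the cross-term estimate: obtaining the right high-probability control of the data-dependent random matrix $(\Theta^{\mathrm{cln}}_{\mathrm{out}})^{\top}\mathcal{E}$. This is where assumption~(iii) must be invoked carefully to produce the martingale / conditional-independence structure for the accumulated SGD noise, where one must track how the $K$ local steps inflate the noise scale to $\sigma\sqrt{K}$ (which in turn relies on the step-size condition to keep $\|M\|_2<1$, so the accumulation is not geometric), and where the $\sqrt{d+1+\ln(2d/\delta)}$ dimension/confidence factor is produced. The secondary hazard is bookkeeping: bounding $\|(I-M^{K})^{-1}\|_2$ and $\|\beta^{\star}\|_2$ so they do not destroy the linear dependence on $\eta$.
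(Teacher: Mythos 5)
Your reduction coincides with the paper's own up to and including the stacked regression: unrolling the $K=E\lceil S_c/B\rceil$ local steps with $M=\i-\tfrac{2\eta}{S_c}\x^T\x$, using the step-size condition to make $\w=\i-M^{K}$ invertible (the paper proves this in a small lemma via the spectral decomposition of $\x^T\x$), and recognizing line~3 of Alg.~\ref{algo:lineardecode} as OLS for a linear model whose last-row coefficient block is $(\localmodel)^T$; your noiseless computation is exactly the paper's proof of Proposition~2. Where you diverge is the stochastic part. The paper does not run an errors-in-variables perturbation of the OLS solution: it pushes all accumulated SGD noise onto the response side, writing $\model_c^t(0)=\w^{-1}\bigl(\model_c^t(0)-\model_c^t(K)\bigr)+\localmodel+\ga^t$ with each coordinate of $\ga^t$ sub-Gaussian of variance proxy $O\bigl(d\eta^2\sigma^2K/\lambda_{\min}^2(\w)\bigr)$, treats the observed $\Theta_{\text{out}}$ as the given design, and invokes a standard fixed-design sub-Gaussian regression bound coordinate by coordinate, followed by a union bound over the $d$ coordinates and assumption~(iv). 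The benefit of that route is that the noise enters only through its projection onto the column space of $\Theta_{\text{out}}$, so no upper bound on the design matrix is ever needed.

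That is precisely where your version has a concrete gap. Your final estimate is driven by $\lVert(\Theta^{\mathrm{cln}}_{\mathrm{out}})^{T}\mathcal{E}\rVert_2=O\bigl(\sigma\sqrt{K}\,\lVert\Theta^{\mathrm{cln}}_{\mathrm{out}}\rVert_2\sqrt{d+1+\ln(2d/\delta)}\bigr)$ combined with $\lVert(\Theta_{\text{out}}^{T}\Theta_{\text{out}})^{-1}\rVert_2\le 1/(n_c\underline{\lambda})$, which yields a rate proportional to $\lVert\Theta^{\mathrm{cln}}_{\mathrm{out}}\rVert_2/(n_c\underline{\lambda})$; but no assumption of the theorem bounds $\lVert\Theta^{\mathrm{cln}}_{\mathrm{out}}\rVert_2$ from above (assumption~(iv) is only a lower bound on the Gram eigenvalues), so the claimed $1/\sqrt{n_c\underline{\lambda}}$ scaling does not follow as written---``relating $\lVert\Theta^{\mathrm{cln}}_{\mathrm{out}}\rVert_2$ to the problem constants'' is the missing step. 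The fix essentially returns you to the paper's argument: substitute $(\Theta^{\mathrm{cln}}_{\mathrm{out}})^{T}=\Theta_{\text{out}}^{T}-\eta[\mathcal{E}\;\;0]^{T}$ and bound $\lVert(\Theta_{\text{out}}^{T}\Theta_{\text{out}})^{-1}\Theta_{\text{out}}^{T}\mathcal{E}\rVert_2\le (n_c\underline{\lambda})^{-1/2}\lVert\Pi\mathcal{E}\rVert_2$, where $\Pi$ is the orthogonal projector onto the column space of $\Theta_{\text{out}}$ and $\lVert\Pi\mathcal{E}\rVert_2$ concentrates at $\sigma\sqrt{K\,(d+1+\ln(2d/\delta))}$; the leftover pieces are the quadratic-in-$\mathcal{E}$ terms you already flagged as lower order. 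Two minor points: the quadratic term in your identity should be $[\mathcal{E}\;\;0]^{T}\mathcal{E}$ rather than $\mathcal{E}^{T}\mathcal{E}$ (harmless), and within a round the design row and the noise $\epsilon_i$ are built from the same SGD draws, so assumption~(iii) only decouples $\model^{t_i}(0)$ from past client noise rather than giving full independence---the paper itself treats this per-round coupling loosely (it even assumes disjoint message sets per coordinate to justify its union bound), so flag it rather than assert it.
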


When the batch size is equal to the local dataset size, Alg.~\ref{algo:lineardecode} exactly reconstructs the optimal local model (proof in Appendix\ref{app:proof_corollary}):

\begin{prop}\label{cor:linear_reconstruction}
Consider a federated least squares regression and assume that client $c$ has $d$-rank design matrix and updates the global model through full-batch gradient updates (i.e., $B=S_c$) and an arbitrary number of local epochs. Once the adversary eavesdrops on $d+1$ communication exchanges between client $c$ and the server, he can recover the client's optimal local model.
\end{prop}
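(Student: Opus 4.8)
The plan is to exploit that, for a quadratic local loss, one full-batch gradient step is an \emph{affine} map whose unique fixed point is exactly the OLS solution $\localmodel=(\x^\top\x)^{-1}\x^\top\y$; this reduces line~\ref{algoline:ols} of Alg.~\ref{algo:lineardecode} to solving an exact, noise-free linear system. Concretely, with $\locallossfunction(\model,\localdataset)\propto\lVert\x\model-\y\rVert_2^2$ one full-batch step reads $\model\mapsto(\i-\mathbf{G})\model+\mathbf{G}\localmodel$, where $\mathbf{G}$ is a fixed positive multiple of $\eta\,\x^\top\x$ and is symmetric positive definite because $\x$ has rank $d$ (I use the normal equations $\x^\top\y=\x^\top\x\,\localmodel$ to put the affine offset in this form). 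Iterating this map $K$ times (an arbitrary fixed number of full-batch epochs) from the broadcast model $\model^{t}(0)=\model_c^{t}(0)$ and summing the geometric series gives $\model_c^{t}(K)=(\i-\mathbf{G})^K\model^{t}(0)+\mathbf{M}\localmodel$ with $\mathbf{M}:=\i-(\i-\mathbf{G})^K$, hence
\begin{equation*}
\model_c^{t}(0)-\model_c^{t}(K)=\mathbf{M}\bigl(\model^{t}(0)-\localmodel\bigr).
\end{equation*}
I would then note that $\mathbf{M}$ is the \emph{same} symmetric matrix at every inspected round and is invertible for the step sizes already assumed in Theorem~\ref{prop:stochastic_case} (there the eigenvalues of $\i-\mathbf{G}$ lie in $[0,1)$, so those of $\mathbf{M}$ lie in $(0,1]$).

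Next I would rearrange the displayed identity, using $\model^{t}(0)=\model_c^{t}(0)$, into $\model_c^{t}(0)=\mathbf{M}^{-1}\bigl(\model_c^{t}(0)-\model_c^{t}(K)\bigr)+\localmodel$; transposing, the $i$-th row of $\Theta_{\text{in}}$ equals the $i$-th row of $\Theta_{\text{out}}$ times the \emph{fixed} matrix $\mathbf{X}\in\real^{(d+1)\times d}$ whose first $d$ rows are $(\mathbf{M}^{-1})^\top$ and whose last row is $(\localmodel)^\top$. This is exactly what the column of $1$'s appended to $\Theta_{\text{out}}$ is for: it carries the affine offset $\localmodel$ as an extra coordinate, turning an affine relation into a homogeneous linear one. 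Stacking the $d+1$ inspected rounds yields the exact matrix equation $\Theta_{\text{in}}=\Theta_{\text{out}}\,\mathbf{X}$, with no residual to fit.

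It then remains only to show that the square matrix $\Theta_{\text{out}}\in\real^{(d+1)\times(d+1)}$ is invertible; granting this, $(\Theta_{\text{out}}^\top\Theta_{\text{out}})^{\dagger}\Theta_{\text{out}}^\top\Theta_{\text{in}}=\Theta_{\text{out}}^{-1}\Theta_{\text{in}}=\mathbf{X}$, whose last row is precisely $(\localmodel)^\top$, so Alg.~\ref{algo:lineardecode} returns $\hat\model_c^{*}=\localmodel$ exactly. The rows of $\Theta_{\text{out}}$ are $\bigl[(\model^{t_i}(0)-\localmodel)^\top\mathbf{M}^\top,\ 1\bigr]$, and since $\mathbf{M}$ is invertible these $d+1$ vectors are linearly independent iff the broadcast models $\model^{t_1}(0),\dots,\model^{t_{d+1}}(0)$ are affinely independent. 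I expect this to be the only real obstacle: it forces a mild non-degeneracy (general-position) hypothesis on the inspected global models, which must be stated explicitly but holds in any realistic FL run because of the randomness injected by the other clients' updates and the server aggregation. I would also make explicit the accompanying uniformity assumption --- the same $\eta$ and the same $K$ at every inspected round --- since otherwise $\mathbf{M}$ becomes round-dependent and the single system $\Theta_{\text{in}}=\Theta_{\text{out}}\mathbf{X}$ breaks down.
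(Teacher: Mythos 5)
Your proof follows essentially the same route as the paper's: with full-batch updates the gradient noise vanishes, one obtains the exact affine relation $\model_c^{t}(0)=\w^{-1}\left(\model_c^{t}(0)-\model_c^{t}(K)\right)+\localmodel$ (the paper's Eq.~\eqref{eq:ols_no_noise}, with your $\mathbf{M}$ playing the role of $\w$, whose invertibility is Lemma~\ref{lem:invert}), and the local optimum is read off by solving the resulting linear system from $d+1$ inspected rounds. You are somewhat more explicit than the paper about the conditions it leaves implicit --- the step-size bound ensuring $\w$ is invertible, the round-independence of $\w$, and the affine independence of the $d+1$ broadcast models needed for $\Theta_{\text{out}}$ to be invertible --- but the underlying argument is the same.
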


Finally, the following proposition shows that 
our attack for the full-batch gradient case is \textbf{order-optimal} in terms of the number of messages the adversary needs to eavesdrop. The proof is in Appendix~\ref{app:proof_proposition}.

\begin{prop}\label{prop:order-optimal}
Consider that the federated least squares regression is trained through FedAvg with one local step ($E=1$) and full batch ($B=S_c$ for each client $c \in \mathcal{C}$). At least one client is required to communicate with the server $\Omega(d)$ times for the global model to be learned, and the adversary needs to eavesdrop at least $\Omega(d)$ messages from this client to reconstruct her optimal local model.
\end{prop}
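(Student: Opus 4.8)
The statement packages two $\Omega(d)$ lower bounds, and I would establish both on a single hard instance of federated least squares regression with a constant number of clients. The starting observation is that with $E=1$ and $B=S_c$ one FedAvg round performs a single full-batch gradient step at every selected client, so --- the loss being quadratic --- the server iterate evolves as $\model^{t+1}=\model^{t}-\eta\,\nabla\globallossfunction(\model^{t})$, i.e.\ exactly gradient descent on the convex quadratic $\globallossfunction$, and $\model^{t}-\model^{0}$ stays in the Krylov subspace generated by $\nabla\globallossfunction(\model^{0})$ under the Hessian $\h$ of $\globallossfunction$.

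\emph{Part 1 (training needs $\Omega(d)$ rounds from some client).} Take the instance with two clients, both selected in every round, and choose their data so that $\h$ has $d$ distinct positive eigenvalues and the initial error $\model^{0}-\globalmodel$ has a nonzero component along every eigenvector of $\h$. After $T$ rounds $\model^{T}-\globalmodel=q(\h)(\model^{0}-\globalmodel)$ for a polynomial $q$ with $q(0)=1$ and $\deg q\le T$; the exactness requirement $\model^{T}=\globalmodel$ then forces $q$ to vanish at the $d$ distinct eigenvalues of $\h$, so $T\ge d$. Since both clients communicate once per round, each one --- in particular the one I attack next --- communicates $\Omega(d)$ times before the global model is learned; this is just the classical first-order lower bound for quadratics transplanted into the FL loop.

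\emph{Part 2 (the adversary needs $\Omega(d)$ messages).} With $E=1$, $B=S_c$ the update client $c$ returns at round $t_i$ is $\model_c^{t_i}=\model^{t_i}-\eta\,\h_c(\model^{t_i}-\localmodel)$ with $\h_c=\tfrac{2}{S_c}\x^{T}\x$, so the only fresh information in $\messages_c$ is the pseudo-gradient $\eta\,\h_c(\model^{t_i}-\localmodel)$ read off at the known probe point $\model^{t_i}$; recovering $\localmodel$ from $n$ inspected rounds is exactly locating the root of the unknown affine map $\model\mapsto\h_c\model-\h_c\localmodel$ from its values at $n$ points, with $\h_c$ constrained to be symmetric, positive definite and of rank $d$. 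I would show this root is undetermined when $n\le d$: the observations pin $\h_c$ down only on $U:=\mathrm{span}\{\model^{t_i}-\model^{t_1}\}$, a subspace of dimension $\le n-1<d$, so (after fixing the sign) there is a rank-one symmetric $\Delta$ supported on $U^{\perp}$ with $\h_c+\Delta\succ0$ for which the pair $(\h_c+\Delta,\ \h_c\localmodel+\Delta\model^{t_1})$ reproduces every pseudo-gradient at the inspected rounds, while to first order in $\Delta$ the root moves by $\h_c^{-1}\Delta(\model^{t_1}-\localmodel)\ne0$. Realizing $(\h_c+\Delta,\ \h_c\localmodel+\Delta\model^{t_1})$ by a perturbed dataset $(\x,\y)$ of rank $d$, and cancelling $\Delta$ in the aggregate gradient field by an equal-and-opposite small perturbation of the second client's loss (legitimate since the $p_c$ are fixed and positive, and this leaves $\globalmodel$ and the whole trajectory $\{\model^{t_i}\}$ untouched), produces two complete federated executions with identical transcripts $\messages_c$ but different $\localmodel$. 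Hence $n\le d$ messages from $c$ cannot reconstruct $\localmodel$, which together with the $d+1$ of Prop.~\ref{cor:linear_reconstruction} shows Alg.~\ref{algo:lineardecode} is order-optimal.

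\emph{Main obstacle.} The delicate point is the Part 2 construction, because it must hold for \emph{every} choice of the $n\le d$ inspected rounds, i.e.\ I must guarantee that along the hard trajectory $\model^{t_1}-\localmodel$ never lies in $U$. In the eigenbasis of $\h_c$ the vectors $\model^{t_i}-\localmodel$ have coordinates $\bigl(\alpha_\ell(1-\eta\lambda_\ell)^{t_i}\bigr)_\ell$, so their linear independence for arbitrary distinct exponents $t_1,\dots,t_n$ with $n\le d$ reduces to nonsingularity of a generalized Vandermonde matrix with distinct positive nodes $1-\eta\lambda_\ell$ (equivalently, a Müntz/Descartes sign argument), which holds once $\eta$ is small enough and the $\lambda_\ell$ are distinct and positive --- exactly the regime already assumed for the step size. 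The rest (keeping the perturbed design matrices PSD of rank $d$, and making the second client simultaneously a legitimate compensator for Part 2 and compatible with the hardness used in Part 1) is routine but needs to be checked with care.
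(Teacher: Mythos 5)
Your Part~1 is a sound (and genuinely different) argument: with $E=1$ and full batch, FedAvg with full participation is exact gradient descent on the global quadratic, and the Krylov/polynomial argument ($q(0)=1$, $\deg q\le T$, $q$ must vanish at $d$ distinct eigenvalues for exact recovery) gives $T\ge d$. The paper instead builds a Nesterov-type chain instance (tridiagonal local Gram matrix for the attacked client, identity Gram matrices and zero targets for all others) in which coordinate $t{+}1$ of every iterate stays exactly zero until the attacked client has participated $t{+}1$ times, while the global optimum is nonzero in every coordinate; this version is robust to partial participation and, more importantly, it is the same structural property that powers their Part~2.

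The genuine gap is in your Part~2, and it sits exactly where you flag it. Your two-world construction (a rank-one symmetric $\Delta$ supported on $U^{\perp}$ at the attacked client, cancelled in the aggregate by an equal-and-opposite perturbation of the other client so the server trajectory and the inspected pairs $(\model^{t_i},\model_c^{t_i})$ are unchanged) only changes the local optimum if $\Delta(\model^{t_1}-\localmodel)\neq 0$, i.e.\ if $\model^{t_1}-\localmodel\notin U$, and this must hold for \emph{every} admissible choice of the $n\le d$ inspected rounds. Your proposed verification is not correct as sketched: the iterates satisfy $\model^{t}-\globalmodel=(\i-\eta\h)^{t}(\model^{0}-\globalmodel)$ with $\h$ the \emph{global} Hessian, so in no natural basis do the vectors $\model^{t_i}-\localmodel$ have the pure exponential coordinates $\alpha_\ell(1-\eta\lambda_\ell)^{t_i}$ you assume --- they are exponentials in the eigenbasis of the global $\h$ plus the constant offset $\globalmodel-\localmodel$, and the relevant spectrum is that of $\h$, not of $\h_c$. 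Hence the reduction to a generalized Vandermonde determinant does not go through as stated, and the key non-degeneracy lemma (linear independence, or at least $\model^{t_1}-\localmodel\notin\mathrm{span}\{\model^{t_i}-\model^{t_1}\}$, for all round subsets, simultaneously with the compensator client remaining a valid least-squares client and with whatever structure Part~1 needs) is missing. The paper avoids this entirely: in its chain instance the first $d-1$ rounds never touch the last coordinate, so replacing the attacked client's Gram matrix by the same matrix with the last row and column zeroed leaves every message exchanged in those rounds literally identical while moving her optimal local model, giving the indistinguishability with no genericity argument. (As a side remark, if you do repair the non-degeneracy lemma, your version would yield a slightly stronger count-based statement --- any $n\le d$ messages, from arbitrary rounds, are insufficient --- whereas the paper's construction as written rules out adversaries confined to the first $d-1$ rounds.)
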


\paragraph{Toy Example Illustration.} 
Here, we illustrate the performance of our Alg.~\ref{algo:lineardecode} on a toy dataset detailed in Appendix~\ref{app:toydataset}. Figure~\ref{fig:linear_toy} (left) demonstrates that 
as the batch size increases, the reconstructed model is closer to the optimal local model (as shown in our Theorem~\ref{prop:stochastic_case}). Moreover, since the model overfits more the dataset (with smaller loss), the AIA accuracy increases (Figure~\ref{fig:linear_toy} (right)). 
\begin{figure}[htbp]
\begin{subfigure}[t]{0.45\columnwidth}
    \centering
    \includegraphics[scale=0.21]{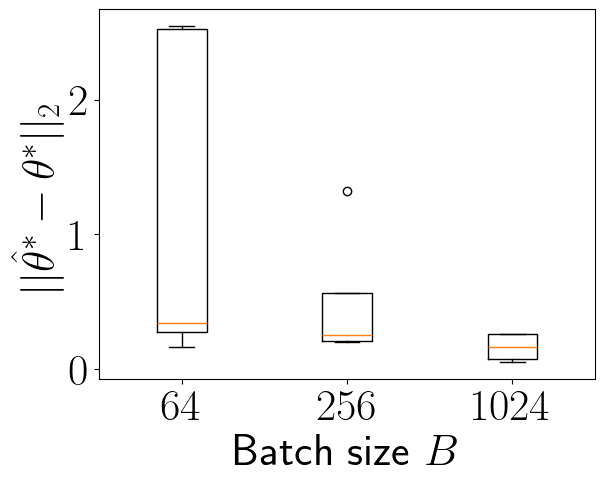}
\end{subfigure}
\begin{subfigure}[t]{0.45\columnwidth}
    \centering
    \includegraphics[scale=0.205]{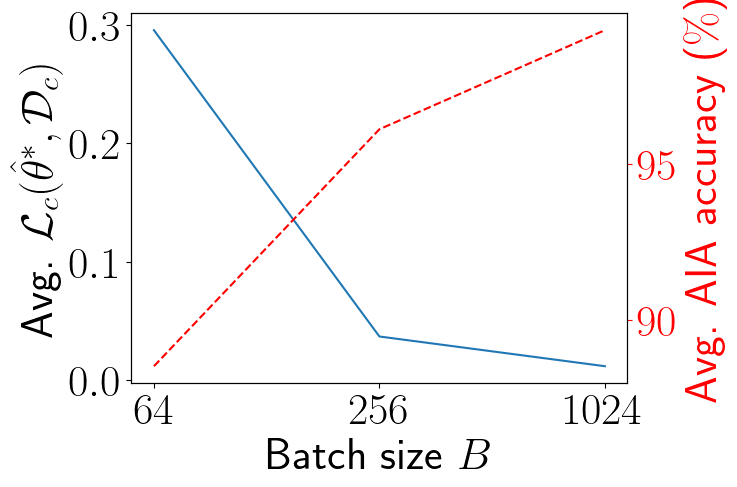}
\end{subfigure}
\caption{The performance of our passive approach for reconstructing optimal local model (left) and the triggered AIA (right) on a toy dataset with two clients training a linear model with size $d=11$ under batch size 64, 256 and 1024 for 5 seeds each, respectively. The passive adversary only eavesdropped $d+1$ messages.}
\label{fig:linear_toy}
\end{figure}

\subsection{Active Approach} \label{subsec:active}
Here we consider an active adversary (e.g., the server itself) that can modify the global model weights $\model^t$ to recover the client's optimal local model. 
To achieve this, the adversary can simply send back to the targeted client $c$ her own model $\model_c^{(t-1)}$ instead of the averaged model $\model^t$. In this way, client $c$ is led to compute her own optimal local model.

We propose a slightly more sophisticated version of this active attack. Specifically, we suggest that the adversary emulates the Adam optimization algorithm~\cite{adam} for the model updated by client $c$ by adjusting the learning rate for each parameter based on the magnitude and history of the gradients, and incorporating momentum. The motivation is twofold. First, the client does not receive back exactly the same model and thus cannot easily detect the attack. Second, Adam is known to minimize the training error faster than stochastic gradient descent at the cost of overfitting more to the training data~\cite{adam_generalize_1, adam_generalize}. We can then expect Adam to help the adversary reconstruct the client's optimal local model better for a given number of modified messages, and our experiments in Sec.~\ref{sec:exp}
 confirm that this is the case.

The details of this attack are outlined in Alg.~\ref{algo:active}. We observe that the adversary does not need to systematically modify all messages sent to the target client $c$ but can modify just a handful of messages that are not necessarily consecutive. This contributes to the difficulty of detecting the attack.

\begin{algorithm}[t]
\caption{Reconstruction of client-$c$ local model by an active adversary $a$}\label{algo:active}
 \textbf{Input}: Let $\mathcal{T}^a_c $ be set of rounds during which the adversary attacks client $c$ and $\model^a_c$ be the corresponding malicious model.
\begin{algorithmic}[1]
    \STATE{$\model_c^a \leftarrow$ latest model received from client $c$}
    \FOR{$t\in  \mathcal{T}^a_c$}
        \STATE{$a$ sends the model $\model^a_c$ to client $c$,}
        \STATE{$a$ waits the updated model from $\model_c$ 
        from client $c$,}
        \STATE{$a$ computes the pseudo-gradient $\model^a_c-\model_c$ and updates $\model^a_c$ and the corresponding moment vectors following Adam~\cite[Alg.~1]{adam},}
       
    \ENDFOR
\STATE{Return $\model^a_c$ as the estimator for client $c$'s local model}
\end{algorithmic}
\end{algorithm}


\section{Experiments}\label{sec:exp}
\subsection{Datasets} \label{subsec:datasets}
\paragraph{Medical~\cite{medical_dataset}.} This dataset includes 1,339 records and 6 features: age, gender, BMI, number of children, smoking status, and region. The \emph{regression} task is to predict each individual's medical charges billed by health insurance. The dataset is  split i.i.d. between 2 clients. 

\paragraph{Income~\cite{income_dataset}.} 
This dataset contains census information from 50 U.S. states and Puerto Rico, spanning from 2014 to 2018. It includes 15 features related to demographic information such as age, occupation, and education level. 
The \emph{regression} task is to predict an individual's income. 
We investigate two FL scenarios, named \mbox{Income-L} and Income-A, respectively. In Income-L, there are 10 clients holding only records from the  state of Louisiana (15,982 records in total). These clients can be viewed as the local entities working for the Louisiana State Census Data Center. We examine various levels of statistical heterogeneity among these local entities, with the splitting strategy detailed in Appendix~\ref{app:datasplit_incomel}. In Income-A, there are 51 clients, each representing a census region and collectively covering all regions. Every client randomly selects 20\% of the data points from the corresponding census region, resulting in a total of 332,900 records.

For all the datasets, each client keeps $90\%$ of its data for training and uses $10\%$ for validation.

\subsection{FL Training and Attack Setup}
In all the experiments, each client follows FedAvg to train a \emph{neural network} model with a single hidden layer of 128 neurons, using ReLU as activation function. 
The number of communication rounds is fixed to $T = \lceil 100/E \rceil$ where $E$ is the number of local epochs. Each client participates to all rounds, i.e.,~$\mathcal{T}_c = \{0,...,T-1\}$.
The learning rate is tuned for each training scenario (different datasets and number of local epochs), with values provided in Appendix~\ref{app:hyperparameters}. The passive adversary may eavesdrop all the exchanged messages until the end of the training. 
The active adversary launches the attack after $T$ rounds\footnote{We also examine the impact of the attack's starting round. The results are presented in Appendix~\ref{app:starting_points}} for additional $\lceil 10/E \rceil$ and $\lceil 50/E \rceil$ rounds.
Every attack is evaluated over FL trainings from 3 different random seeds.
For Medical dataset, the adversary infers whether a person smokes or not. For Income-L and Income-A datasets, the adversary infers the gender. 
The AIA accuracy is the fraction of the correct inferences over all the samples. 
We have also conducted experiments for federated least squares regression on the same datasets. The results can be found in Appendix\ref{app:linear_model}. 

\subsection{Baselines and Our Attack Implementation}\label{subsec:baseline}
\paragraph{Gradient-Based.}
We compare our method with the (gradient-based) SOTA (Sec.~\ref{subsec:gradient-based}). 
The baseline performance is affected by the set of  inspected rounds~$\mathcal{T}$ considered in~\eqref{eq:aia_gradient_based}. We select the inspected rounds $\mathcal{T}$ based on two criteria: the highest cosine similarity and the best AIA accuracy. In a real attack, the adversary is not expected to know the attack accuracy beforehand.
Therefore,  we refer to the attack based on the highest cosine similarity as {\bf{Grad}}  and to the other as {\bf{Grad-w-O}} ({\bf Grad}ient {\bf w}ith {\bf O}racle), as it assumes the existence of an oracle that provides the attack accuracy. The details for the tuning of $\mathcal{T}$ and other hyper-parameter settings can be found in Appendix~\ref{app:hyperparameters}.
   
\paragraph{Our Attacks.} Our attacks consist of two steps: 1) reconstructing the optimal local model, and 2) executing the model-based AIA in~\eqref{eq:aia_model_based}. For the first step, a passive adversary uses the last-returned model from the targeted client, while an active adversary  executes Alg.~\ref{algo:active}. The details of the hyperparameter settings can be found in Appendix~\ref{app:hyperparameters}.

\paragraph{Model-Based with Oracle (Model-w-O).} 
To provide an upper bound on the performance of our approach, we assume the existence of an oracle that provides the optimal local model for the first step of our attack. In practice, the optimal local model is determined empirically by running Adam for a very large number of iterations.

\subsection{Experimental Results}

\begin{table}[ht]
    \centering
    \begin{tabular}{|c|c|c|c|c|}
        \hline
         \multicolumn{2}{|c|}{\bfseries \backslashbox{AIA (\%)}{Datasets}}  & Income-L  &Income-A  &Medical \\
         \hhline{|=|=|=|=|=|}
    \multirow{3}{*}{\bfseries Passive}& Grad&60.36\scriptsize{$\pm$0.67} &54.98\scriptsize{$\pm$0.29} &87.26\scriptsize{$\pm$0.92} \\ 
    & Grad-w-O &71.44\scriptsize{$\pm$0.33}  &\bf{56.10}\scriptsize{$\pm$1.12} &91.06\scriptsize{$\pm$0.55} \\ 
    &Ours &\bf{75.27}\scriptsize{$\pm$0.32}  &55.75\scriptsize{$\pm$0.17}  & \bf{95.90}\scriptsize{$\pm$0.04}\\
    \hhline{|=|=|=|=|=|}
    \multirow{3}{*}{\shortstack[c]{\bfseries Active \\(10 Rnds)}}& Grad&60.24\scriptsize{$\pm$0.60}  &54.98\scriptsize{$\pm$0.29} &87.26\scriptsize{$\pm$0.92} \\ 
       & Grad-w-O &80.69\scriptsize{$\pm$0.55} &56.10\scriptsize{$\pm$1.12} &91.06\scriptsize{$\pm$0.55} \\ 
    & Ours&\bf{82.02}\scriptsize{$\pm$0.85} &\bf{63.53}\scriptsize{$\pm$0.73} &\bf{95.93}\scriptsize{$\pm$0.07} \\
    \hline
    \multirow{3}{*}{\shortstack[c]{\bfseries Active \\(50 Rnds)}}& Grad&60.24\scriptsize{$\pm$0.60}  & 53.36\scriptsize{$\pm$0.40} &87.26\scriptsize{$\pm$0.92} \\ 
     & Grad-w-O &80.69\scriptsize{$\pm$0.55}  &56.12\scriptsize{$\pm$0.12} &91.06\scriptsize{$\pm$0.55} \\ 
    & Ours&\bf{94.31}\scriptsize{$\pm$0.11} &\bf{78.09}\scriptsize{$\pm$0.25} &\bf{96.79}\scriptsize{$\pm$0.79} \\
    \hhline{|=|=|=|=|=|}
    \multicolumn{2}{|c|}{\bfseries Model-w-O}&94.31\scriptsize{$\pm$0.11} &78.31\scriptsize{$\pm$0.07} &96.79\scriptsize{$\pm$0.79} \\ 
    \hline 
    \end{tabular}
    \caption{The AIA accuracy over all clients' local datasets evaluated under both honest-but-curious (passive) and malicious (active) adversaries across  Income-L, Income-A, and Medical FL datasets~(Sec.~\ref{subsec:datasets}). 
    The standard deviation is evaluated over three FL training runs with different random seeds.
    All clients run FedAvg with 1 epoch and batch size 32. For Income-L, we consider the dataset with $40\%$ of data heterogeneity Appendix~\ref{app:datasplit_incomel}.}
    \label{tab:attacks_32}
\end{table}
 From Table~\ref{tab:attacks_32}, we see that our attacks outperform gradient-based ones in both passive and active scenarios across all three datasets. Notably, our passive attack achieves improvements of over $15$ and $8$ percentage points (p.p.) for the Income-L and Medical datasets, respectively. Even when the gradient-based method has access to an oracle, our passive attack still achieves higher accuracy on two datasets and comes very close on Income-A. When shifting to active attacks, the gains are even more substantial. For instance, when the attack is active for 50 rounds, we achieve gains of 13, 22, and 5 percentage points (p.p.) in Income-L, Income-A, and Medical, respectively, over Grad-w-O, and even larger gains over the more realistic Grad. Furthermore, the attack accuracy reaches the performance expected from an adversary who knows the optimal local model. Interestingly, while our attacks consistently improve as the adversary's capacity increases (moving from a passive attacker to an active one and increasing the number of rounds of the active attack), this is not the case for gradient-based methods.

\begin{figure*}[hbt!]
\begin{minipage}[b]{0.33\linewidth}
\includegraphics[scale=0.2]{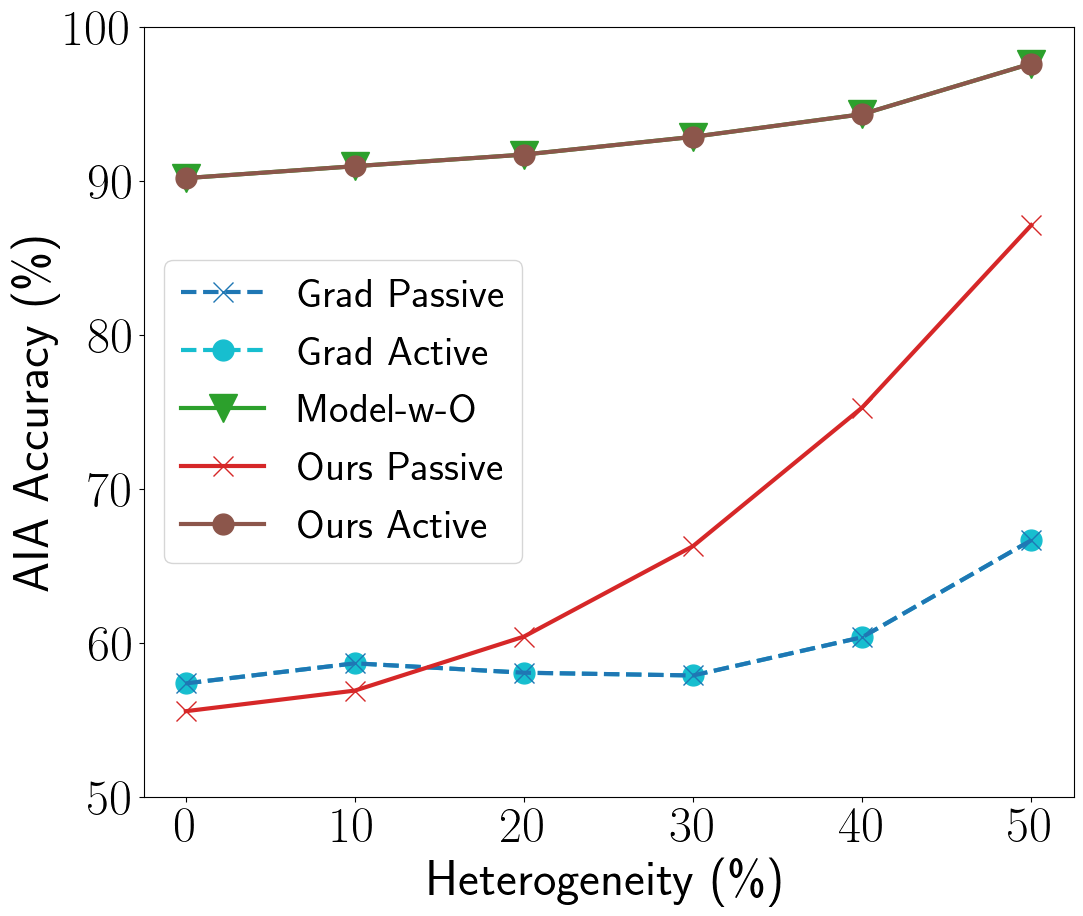}
\end{minipage}
\begin{minipage}[b]{0.33\linewidth}
\includegraphics[scale=0.2]{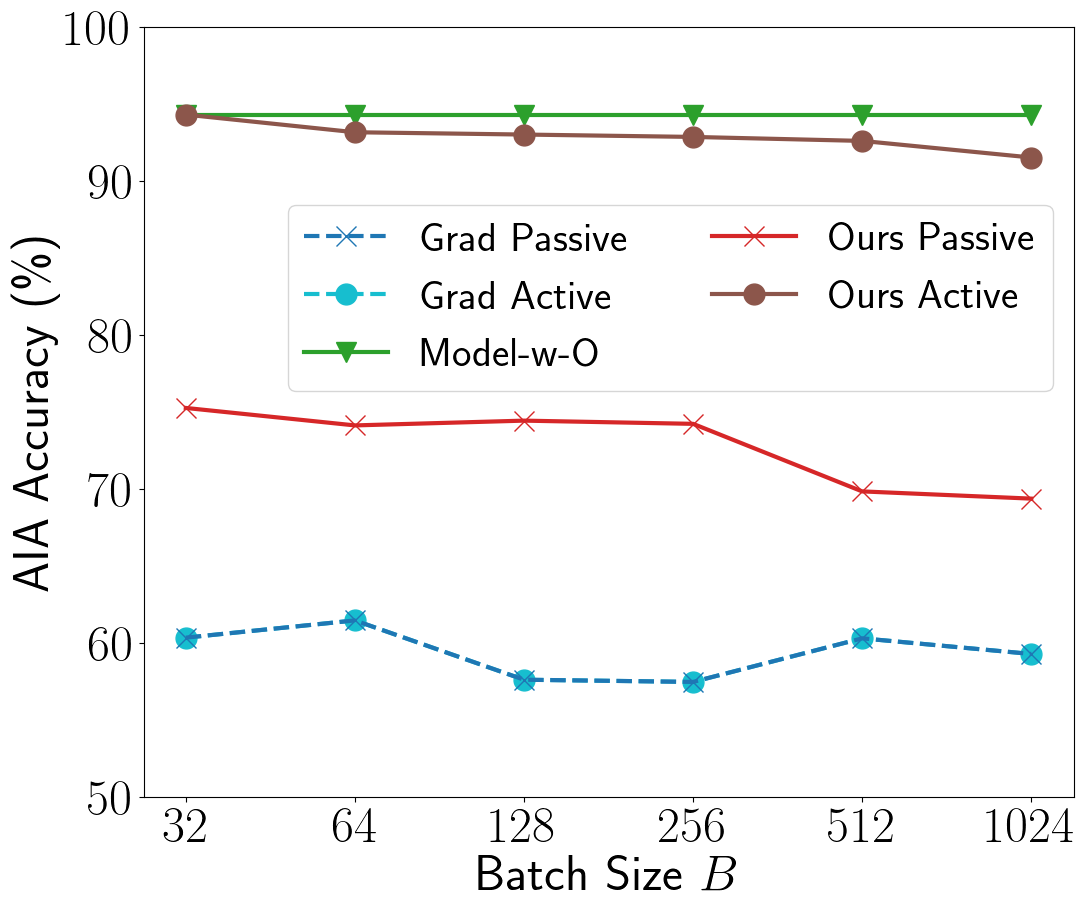}
\end{minipage}
\begin{minipage}[b]{0.33\linewidth}
\includegraphics[scale=0.2]{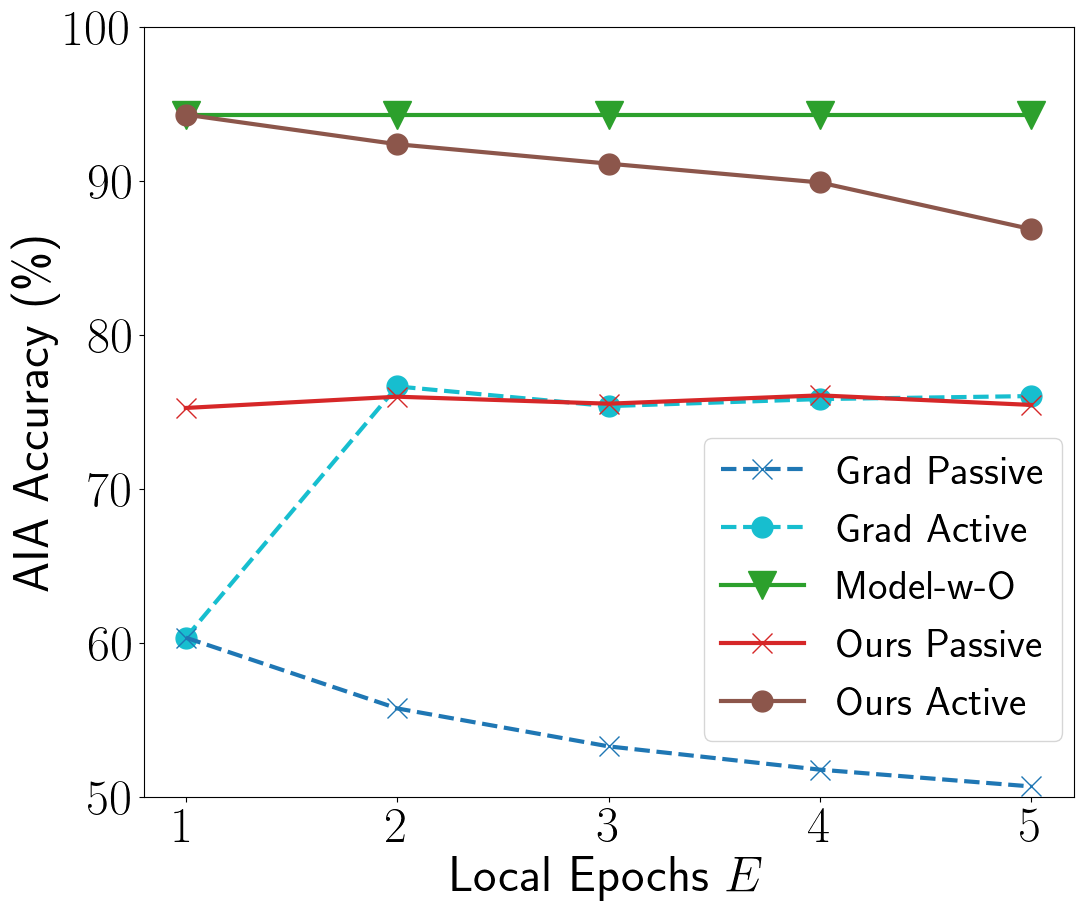}
\end{minipage}
\caption{The AIA accuracy over all clients' local datasets under different heterogeneity levels (left) ($0\%$ represents i.i.d case), batch sizes (center), and local epochs (right) for Income-L dataset. The default values for heterogeneity level, batch size $B$ and local epochs $E$ are set to $40\%$, $32$, and $1$, respectively. The malicious adversary attacks $\lceil 50/E \rceil$ rounds after $\lceil 100/E \rceil $ communication rounds.
Crosses represent passive attacks, while dots represent active attacks. Dashed lines correspond to gradient-based attacks (Grad), and solid lines correspond to model-based attacks (Ours and Model-w-O).}
    \label{fig:impact}
\end{figure*}

\paragraph{Impact of Data Heterogeneity.} 
We simulate varying levels of heterogeneity in the Income-L dataset and illustrate how the attack performance evolves in Figure~\ref{fig:impact} (left). First, we observe that as the data is more heterogeneously split, the accuracy of AIA improves for all approaches. 
Indeed, the data splitting strategy (Appendix~\ref{app:datasplit_incomel}) leads to a greater degree of correlation between the clients' sensitive attributes and the target variable at higher levels of heterogeneity. This intrinsic correlation facilitates the operation of all AIAs.
We observe in these experiments, as in previous ones, that active Grad does not necessarily offer advantages over passive Grad. In the more homogeneous case (which is less realistic in an FL setting), our passive attack performs slightly worse than Grad, but its accuracy increases more rapidly with heterogeneity, resulting in an AIA accuracy advantage of over 20 p.p. in the most heterogeneous case.
Our active attack over 50 additional rounds consistently outperforms Grad by at least 30 p.p. and is almost indistinguishable from \mbox{Model-w-O}.

\paragraph{Impact of Batch Size.} 
Figure~\ref{fig:impact} (center) shows that the performance of our attacks slightly decrease as the batch size increases.  A possible explanation for this is that a larger batch size results in fewer local updates per epoch, leading the client to return a less overfitted model. Despite this, our approach consistently outperforms Grad in all cases.
\paragraph{Impact of the Number of Local Epochs.}
Figure~\ref{fig:impact} (right) shows that under passive attacks our approach is not sensitive to the number of local epochs, whereas Grad's performance deteriorates to the level of random guessing as the number of local epochs increases to 5. Interestingly, active Grad significantly improves upon passive Grad as the number of local epochs increases. While our active approach progressively performs slightly worse, it still maintains an advantage of over 10 p.p.~compared to Grad.

\paragraph{Impact of the Local Dataset Size.}
Figure~\ref{fig:dataset_size_effect} shows how each client's dataset size impacts individual (active) AIA accuracy in Income-A dataset. 
We observe that for all methods, clients with smaller datasets are more vulnerable to the attacks,  
because the models overfit the dataset more easily. 
Moreover, for clients with over 20000 data points, Grad provides an attack performance close to random guessing.
\begin{figure}[ht]
    \centering
    \includegraphics[scale=0.140]{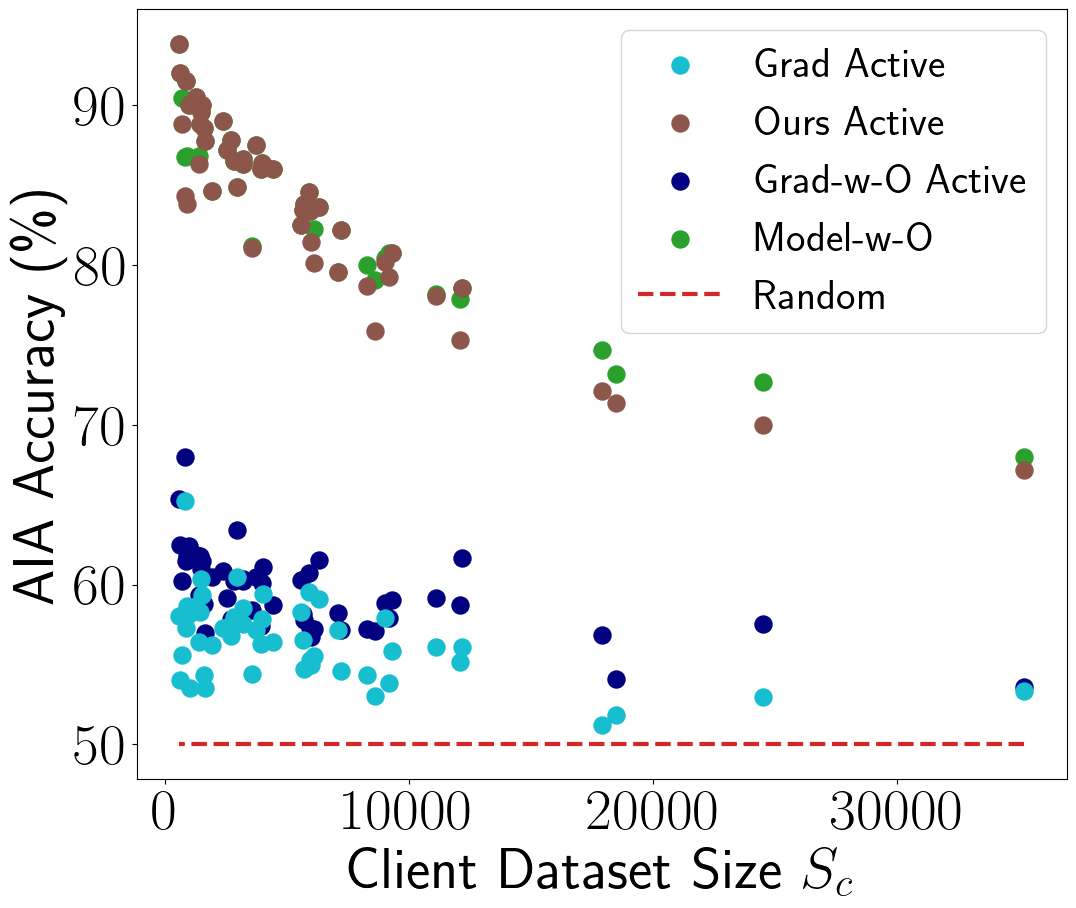}
    \caption{AIA accuracy in Income-A dataset on clients with different local dataset size $S_c$. The experiment setting is the same as in Table~\ref{tab:attacks_32} with  50 active rounds.}
    \label{fig:dataset_size_effect}
\end{figure}

\paragraph{Impact of defense mechanisms.}
To mitigate privacy leakage, we apply a federated version of DP-SGD \cite{DP}, providing sample-level differential privacy guarantees~\cite{Dwork, slDP}. 
Our experiments show that, even with this defense mechanism in place, our approach still outperforms the baselines in most of the scenarios.  Results are in Appendix~\ref{app:defense}.


\section{Discussion and Conclusions}
In our work, we have demonstrated the effectiveness of using model-based AIA for federated \emph{regression} tasks, when the attacker can approximate the optimal local model. For an honest-but-curious adversary, we proposed a computationally efficient approach to reconstruct the optimal \emph{linear} local model in least squares regression. In contrast, for neural network scenarios, our passive approach involves directly utilizing the last returned model (Sec.~\ref{subsec:baseline}). We believe more sophisticated reconstruction techniques may exist, and we plan to investigate this aspect in  future work.

The reader may wonder if the superiority of model-based attacks over gradient-based ones also holds for \emph{classification} tasks. Some preliminary experiments we conducted suggest that the relationship is inverted. We advance a partial explanation for this observation.
 For a linear model with binary cross-entropy loss, it can be shown that the model-based AIA  \eqref{eq:aia_model_based} on a binary sensitive attribute is equivalent to a simple label-based attack, where the attribute is uniquely reconstructed based on the label. This approach leads to poor performance because the attack relies only on general population characteristics and ignores individual specificities.
This observation also holds experimentally for neural networks trained on the Income-L dataset, largely due to the inherent unfairness of the learned models. For example, for the same set of public features, being a man consistently results in a higher probability of being classified as wealthy compared to being a woman. As a consequence, the AIA infers gender solely based on the high or low income label. These preliminary remarks may prompt new interesting perspectives on the relationship between fairness and privacy (see for example the seminal paper~\cite{chang2021privacy}).

Finally, to mitigate privacy leakage, beyond differential privacy mechanisms, there are empirical defenses such as Mixup~\cite{mixup}, and TAPPFL~\cite{aaai_defense}. However, the effectiveness of these defenses has not yet been demonstrated on \emph{regression} tasks, and we leave this for future work.


\section*{Acknowledgements}\label{sec:acks}
This research was supported in part by the Groupe La Poste, sponsor of the Inria Foundation, in the framework of the FedMalin Inria Challenge, as well as by the France 2030 program, managed by the French National Research Agency under grant agreements No. ANR-21-PRRD-0005-01, ANR-23-PECL-0003, and ANR-22-PEFT-0002. It was also funded in part by the European Network of Excellence dAIEDGE under Grant Agreement Nr. 101120726, by SmartNet and LearnNet, and by the French government National Research Agency (ANR) through the UCA JEDI (ANR-15-IDEX-0001), EUR DS4H (ANR-17-EURE-0004), and the 3IA Côte d’Azur Investments in the Future project with the reference number ANR-19-P3IA-0002. \\
The authors are grateful to the OPAL infrastructure from Université Côte d'Azur for providing resources and support.
{\small\bibliography{main}}
\appendix
\appendixpage

\section{Theoretical Results}
\subsection{Full statement and proof of Theorem~1}\label{app:stochasticproof}
\begin{reptheorem}{prop:stochastic_case}
Consider training a least squares regression through FedAvg (Alg.~\ref{algo:fedavg}) with batch size $B$ and local epochs $E$. Assume that 
\begin{enumerate}
    \item the client's design matrix $\x\in \real^{S_c\times d}$ has rank $d$ equal to the number of features plus one;\footnote{Remember that a client's design matrix has a number of rows equal to the input features of the samples in the client's local training dataset.}
    \item the components of the stochastic (mini-batch) gradient are distributed as sub-Gaussian random variables with variance proxy $\sigma^2$, i.e.,
$
\E[\exp(\ei[i]/\sigma^2)] \leq \exp(1), \forall \theta, \forall i\in\{1,2,...,d\}, 
$
where $\ei = \gradient(\model) - \nabla \locallossfunction(\model)$;\footnote{Note that, this condition implies a bounded variance for each component of the stochastic gradients.}
\item the input model vectors at an observed round are independent of the previous stochastic gradients computed by the attacked client;
\item there exists $\underline{\lambda}>0$ such that $\forall n_c \in \mathbb{N}$, we can always select $n_c$ observation rounds so that $\lambda_{\min}\!\left(\frac{\Theta_{\text{out}}^{T} \Theta_{\text{out}}}{n_c}\right) \ge \underline{\lambda}$, where $\lambda_{\min}(\mathbf{A})$ denotes the smallest eigenvalue of the matrix $\mathbf{A}$, and $\Theta_{\text{out}}$ is defined in Alg.~\ref{algo:lineardecode}.
\end{enumerate}
The error of the reconstructed model $\hat\model^*_c$ of Algorithm~\ref{algo:lineardecode} is upper bounded w.p.  $\ge 1-\delta$  when $\eta\leq \frac{S_c}{2\lambda_{\max}(\x^T\x)}$ and
\begin{equation}
   \left\lVert \hat\model^*_c - \localmodel \right\rVert_2 = \mathcal{O}\left(\eta\sigma d\sqrt{ d E\left\lceil\frac{S_c}{B}\right\rceil \frac{d+1+\ln \frac{2d}{\delta}}{n_c \cdot \underline{\lambda}}} \right). 
\end{equation}
\end{reptheorem}

Before presenting the proof, we discuss the assumptions. The first assumption can be relaxed at the cost of replacing in the proof the inverse $\h^{-1}$ of the matrix $\h = \x^{T} \x$ by its pseudo-inverse $\h^\dagger$. The second assumption is common in the analysis of stochastic gradient methods~\cite{agarwal2012stochastic,pmlr-v70-jin17a,vlaski2021,li2020high,zhou2020convergence}. The third assumption is technically not satisfied in our scenario as the stochastic gradients computed by the attacked client $i$ before an observed round $t_n$ have contributed to determine the models sent back by client $i$ at rounds $t < t_n$ and then the global model $\theta^{t_n}(0)$ sent by the server to client~$i$. Nevertheless, we observe that $\theta^{t_n}(0)$ is almost independent on client-$i$'s  stochastic gradients if the set of clients is very large. Moreover, the assumption is verified if we assume a more powerful adversary who can act as a man in the middle and arbitrarily modifies the model sent to the client. Finally, the fourth assumption corresponds to the ``well-behaved data assumption'' for a generic linear regression problem, which is required to be able to prove the consistency of the estimators, i.e., that they converge with probability $1$ to the correct value as the number of samples diverges (see for example~\cite[Thm.~3.11]{shao2003mathematical}). 
In this context, we can observe that $\lambda_{\min}\!\left(\frac{\Theta_{\text{out}}^{T} \Theta_{\text{out}}}{n_c}\right)=\frac{1}{n_c}\lambda_{\min}\!\left(\Theta_{\text{out}}^{T} \Theta_{\text{out}}\right)=\frac{1}{n_c}\sigma_{\min}^2\!\left(\Theta_{\text{out}}\right)=\frac{1}{n_c}\min_{\mathbf{z}, \lVert \mathbf z\rVert= 1} \lVert \Theta_{\text{out}} \mathbf z\rVert_2^2$. Now, consider that the components of the gradient noise $\e(\theta)$ are independent, each with variance lower bounded by $\tau^2>0$, it follows that $\E[\lVert \Theta_{\text{out}} \mathbf z\rVert_2^2]\ge n_c \tau^2 \frac{d}{d+1} \lVert\mathbf z \rVert_2^2 =n_c \tau^2 \frac{d}{d+1}$. This suggests that $\sigma_{\min}^2\!\left(\Theta_{\text{out}}\right)$  grows linearly with $n_c$ and then it is possible to lower bound $\lambda_{\min}\!\left(\frac{\Theta_{\text{out}}^{T} \Theta_{\text{out}}}{n_c}\right)$ with a positive constant. 


\begin{proof}
Let $\h = \x^T\x$ which is positive definite.
Let $\y\in \real^{S_c}$ be the labels in the local dataset $\localdataset$ with size~$S_c=|\localdataset|$. 
\begin{equation}\label{eq:localloss_linear_regression}
    \locallossfunction(\model) = \frac{\left\lVert \x \model -\y\right\rVert^2}{S_c}
\end{equation}

We know that $\localmodel = (\x^T\x)^{-1} \x^T\y$. When computing the stochastic gradient, we have:
\begin{equation}\label{eq:gradient_sim_e}
    \gradient(\model) =  \nabla \locallossfunction(\model)+ \ei = \frac{2}{S_c}\left( \h \model - \h\localmodel \right) + \ei.
\end{equation}
At round $t$, if selected, client $c$ receives the server model and executes Algorithm~\ref{algo:fedavg}. Let $\model^t_c(k)$ be the model after the $k$-{th} local update. To simplify the notation, in the following, we replace $\mathbf{e}(\model^t_c(k))$ by $\e_k$.
Replacing \eqref{eq:gradient_sim_e} in line~\ref{algoline:gradient_update} of~Algorithm~\ref{algo:fedavg}, we have

\begin{align*}
    \model_c^t(K) &=  \left(\i -  \frac{2\eta}{S_c}\h\right)^{K} \model_c^t(0) +  \left[ \i -  \left(\i -  \frac{2\eta}{S_c}\h\right)^{K} \right]\localmodel \\
      &- \sum_{k=1}^{K}\left(\i -  \frac{2\eta}{S_c}\h\right)^{k-1}\eta\e_{K-k+1}.
\end{align*}

where $K=E\lceil \frac{S_c}{B}\rceil$ as in each epoch there are $\lceil \frac{S_c}{B}\rceil$ local steps.
Let $\w = \left[ \i -  \left(\i -  \frac{2\eta}{S_c}\h\right)^{K} \right]$. $\w$ is proven to be invertible in Lemma~\ref{lem:invert}, then we have:

\begin{align}
   \model_c^t(0) &=\w^{-1}\left(  \model_c^t(0)-\model_c^t(K)\right) + \localmodel \nonumber \\
   &- \w^{-1}  \sum_{k=1}^{K}\left(\i -  \frac{2\eta}{S_c}\h\right)^{k-1}\eta\e_{K-k+1} \label{eq:ols}
\end{align}

  Let $\ga^t = -\eta\w^{-1} \sum_{k=1}^{K}\left(\i -  \frac{2\eta}{S_c}\h\right)^{k-1}\e_{K-k+1}$.  
 Let $\mathbf{x}[i]$ be the $i^{th}$ element of vector $\mathbf{x}$ and $\mathbf{X}[i,:]$ and $\mathbf{X}[:,i]$ be the $i^{th}$ row and column of the matrix $\mathbf{X}$, respectively.
Since every element in $\e$ is sub-Gaussian and the weighted sum of independent sub-Gaussian variables is still sub-Gaussian, $\ga[i]$ is sub-Gaussian with $\E[\ga^t[i]] = 0$ and 
\begin{align*}
\var[\ga^t[i]]& \leq d\eta^2\sigma^2 \lVert \w^{-1} \rVert_2^2  \sum_{k=1}^{K}\left\lVert \left(\i -  \frac{2\eta}{S_c}\h\right)^{k-1}\right\rVert^2_2\\
    &= 
    \frac{d\eta^2\sigma^2}{\lambda^2_{\text{min}}(\w)} \sum_{k=1}^{K}\left\lVert (\i -  \frac{2\eta}{S_c}\h)^{k-1}\right\rVert^2_2\\
    &=  \frac{d\eta^2\sigma^2}{\lambda^2_{\text{min}}(\w)} \sum_{k=1}^{K} \rho_{\max}^2( (\i -  \frac{2\eta}{S_c}\h)^{k-1}),\\
\end{align*}
where $\rho_{\max}(\mathbf{A})$ is the spectral radius of matrix $\mathbf{A}$.

Let $\Lambda$ be a diagonal matrix whose entries are the positive eigenvalues of $\h$ and $\lambda_{\max}(\h)$ be the largest eigenvalue of $\h$, i.e.,~$\lambda_{\max}(\h) = \lVert\Lambda \rVert_1$. When $\eta \leq \frac{S_c}{2\lambda_{\max}(\h)}$, the diagonal values in $\frac{2\eta}{S_c}\Lambda$ are positive and smaller than or equal to $1$. 
Moreover, the matrix  $(\i -  \frac{2\eta}{S_c}\h)^{k-1}$ is positive semi-definite and we have $\rho_{\max}[(\i -  \frac{2\eta}{S_c}\h)^{k-1}] \leq 1$. Thus, we have
\begin{equation}\label{eq:bound_var_noise}
    \var[\ga[i]] \leq \frac{d\eta^2\sigma^2 K}{\lambda^2_{\text{min}}(\w)}. 
\end{equation}

Equation \eqref{eq:ols} for $n_c$ observation can be written in a compact way as
\begin{equation}
\label{eq:ols_matrix}
\Theta_{\text{in}} = \Theta_{\text{out}}(\w^{-1}, \model^*_c)^T + \Gamma^T,
\end{equation}
where $\Gamma = (\gamma^{t_1}, \dots, \gamma^{t_c})$.

Let us denote $\w^{-1}$ as $\v$.
We can determine estimators $(\hat\v, \hat\model^*_c)$ for $(\v, \model^*_c)$, through ordinary least squares (OLS)  minimization. In particular, if we extract the relation involving the i-th row of $\v$ and $\localmodel$, we obtain:
\begin{equation}
\label{eq:ols_row}
\Theta_{\text{in}}[:,i] = \Theta_{\text{out}}(\v[i,:], \model^*_c[i,:])^T + \Gamma^T[:,i].
\end{equation}
The OLS estimator is 
\begin{equation}
(\hat{\v}[i,:],\hat{\model}^*_c[i])^T = \argmin_{\mathbf{v} \in \real^d, \theta \in \real }\left\lVert\Theta_{\text{in}}[:,i] - \Theta_{\text{out}}(\mathbf{v}; \theta) \right\rVert_2^2,
\end{equation}
and can be expressed in closed form as
\begin{equation}
    (\hat{\v}[i,:],\hat{\model}^*_c[i])^T= \left(\Theta_{\text{out}}^T\Theta_{\text{out}}\right)^{-1}\Theta_{\text{out}}^T \Theta_{\text{in}}[:,i].
\end{equation}
A more compact representation is the following
\begin{align*}
    (\hat{\v}, \hat\model^*_c)^T
  =\argmin_{\v\in \real^{d\times d},\model\in \real^{d}} \left\lVert \Theta_{\text{in}}- \Theta_{\text{out}} 
   \left( \w' \,\, \model \right)^T 
     \right\rVert^2_F,
 \end{align*}
and the corresponding closed form is 
\begin{equation} \label{eq:estimate_wv}
     (\hat{\v}, \hat\model^*_c)^T = \left(\Theta_{\text{out}}^T\Theta_{\text{out}}\right)^{-1}\Theta_{\text{out}}^T \Theta_{\text{in}} \in \real^{(d+1)\times d}.
 \end{equation}
Note that this is how $\localmodel$ is estimated in Algorithm~\ref{algo:lineardecode}  line~\ref{algoline:ols}.

The presentation of separate linear systems for each row $i$ in \eqref{eq:ols_row} is also instrumental to the rest of the proof.
We observe that \eqref{eq:ols_row} describes a regression model with independent sub-gaussian noise, and then from Theorem~2.2 and Remark~2.3 in~\cite{mitlecture}, we have that for a given row $i\in \{1, \dots, d\}$, w.p. at least $ 1-\frac{\delta}{d}$, 
\begin{align}\label{eq:expected_error_row_wise}
    (\hat \model^*_c[i]-\localmodel[i])^2
    & \le (\hat \model^*_c[i]-\localmodel[i])^2 + \lVert \hat\v[i,:] - \v[i,:] \rVert_2^2\nonumber\\
    & \le \var[\ga[i]] \frac{d + 1 + \log{\frac{2d}{\delta}} }{n_c \lambda_{\min}\left(\frac{\Theta_{\text{out}}^{T} \Theta_{\text{out}}}{n_c}\right)}. \;\; \nonumber
\end{align}
From \eqref{eq:bound_var_noise} and the fourth assumption, it follows that w.p. at least $1-\frac{\delta}{ d}$
\begin{equation}\label{eq:expected_error_row_wise2}
   (\hat \model^*_c[i]-\localmodel[i])^2
    \le \frac{d\eta^2 \sigma^2 K}{\lambda_{\text{min}}^2(\w)} \cdot \frac{d + 1 + \log{\frac{2 d}{\delta}} }{n_c \underline{\lambda}}\;\; .
\end{equation}
We are going now to consider that each row is estimated through a disjoint set of $n_c$ inspected messages, so that the estimates for each row are independent. We can then apply the union bound and obtain that the Euclidean  distance between the two vectors can be bounded as follows, which concludes the proof:
\begin{equation}\label{eq:bounds_w_v}
 \left\lVert \hat\model^*_c - \localmodel \right\rVert_2^2
    \leq \frac{d^3 \eta^2 \sigma^2 K}{\lambda_{\text{min}}^2(\w)} \cdot \frac{d+1+\ln \frac{2 d}{\delta}}{n_c \underline{\lambda}}, \;\; \text{w.p.} \ge 1 - \delta.
\end{equation}

\end{proof}
\subsubsection{About the Eigenvalues of the Matrix W.}

$\h=\x^{T} \x$ is diagonalizable, i.e., $\h=\mathbf{U} \Lambda \mathbf{U}^{T}$ with $\mathbf{U}$ an orthonormal matrix and $\Lambda$ a diagonal one. After some calculations
$$
\begin{aligned}
\w & =I-\left(I-\frac{2 \eta}{S_c} \h\right)^{E \left\lceil\frac{S_c}{b}\right\rceil} \\
& =\mathbf{U}\left(I-\left(I-\frac{2 \eta}{S_c} \Lambda\right)^{E \left\lceil\frac{S_c}{b}\right\rceil}\right) \mathbf{U}^{T}.
\end{aligned}
$$
Then 
$$
\begin{aligned}
\lambda_{\min}(\w) &=1-\left(1-\frac{2 \eta}{S_c} \lambda_{\min}(\h)\right)^{E \left\lceil\frac{S_c}{b}\right\rceil}\\ 
\lambda_{\max}(\w) &=1-\left(1-\frac{2 \eta}{S_c} \lambda_{\max}(\h)\right)^{E \left\lceil\frac{S_c}{b}\right\rceil}
\end{aligned}
$$

\begin{lemma}\label{lem:invert}
Let $\a$ be a symmetric positive definite matrix. $\i - (\i-\a)^n$ is invertible for $n\geq 0$.
\end{lemma}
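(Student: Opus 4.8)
The plan is to reduce the matrix statement to a scalar one via the spectral theorem. Since $\a$ is symmetric positive definite, write $\a = \u\Lambda\u^{T}$ with $\u$ orthogonal and $\Lambda = \mathrm{diag}(\lambda_1,\dots,\lambda_d)$, all $\lambda_i>0$. Every matrix appearing in the statement is a polynomial in $\a$, hence simultaneously diagonalized by $\u$, so $\i - (\i-\a)^n = \u(\i - (\i-\Lambda)^n)\u^{T}$. As $\u$ is invertible, invertibility of the left-hand side is equivalent to the diagonal matrix $\i-(\i-\Lambda)^n$, whose entries are $1-(1-\lambda_i)^n$, having no zero on the diagonal, i.e.\ $(1-\lambda_i)^n\neq 1$ for every $i$.

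The second step is to establish $(1-\lambda_i)^n\neq 1$. First, $n$ must be at least $1$ (for $n=0$ the expression is the zero matrix); this is all that is used, since in the application $n=E\lceil S_c/B\rceil\ge 1$. For $n\ge 1$: when $\lambda_i=1$ we get $(1-\lambda_i)^n=0$; when $0<\lambda_i<1$ we have $|1-\lambda_i|<1$ and hence $|1-\lambda_i|^n<1$; and for odd $n$, $(1-\lambda_i)^n=1$ would force $\lambda_i=0$. In all these cases $1-(1-\lambda_i)^n\neq 0$, so the matrix is invertible. An equivalent route that avoids the parity discussion is the geometric identity $\i-(\i-\a)^n=\a\sum_{k=0}^{n-1}(\i-\a)^k$ (valid since all factors commute): $\a$ is invertible, so it suffices that $\sum_{k=0}^{n-1}(\i-\a)^k$ be invertible, and its eigenvalues $\sum_{k=0}^{n-1}(1-\lambda_i)^k$ are $\ge 1>0$ whenever $\lambda_i\in(0,1]$, because each summand is nonnegative and the $k=0$ term equals $1$.

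The only delicate point --- and the main obstacle to a statement for a fully arbitrary SPD matrix --- is even $n$ combined with an eigenvalue $\lambda_i\ge 1$: then $1-\lambda_i\le 0$, and $(1-\lambda_i)^n=1$ exactly when $\lambda_i=2$ (e.g.\ $\a=2\i$, $n=2$ gives $\i-(\i-2\i)^2=\mathbf 0$). So the lemma should be read with the (harmless) extra proviso that $2$ is not an eigenvalue of $\a$; in the application this holds automatically, because $\a=\tfrac{2\eta}{S_c}\h$ and the step-size bound $\eta\le\tfrac{S_c}{2\lambda_{\max}(\h)}$ places every eigenvalue of $\a$ in $(0,1]$, which is precisely the regime where the argument above goes through. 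Hence the real content of the lemma is this spectral localization of $\a$, and the matrix algebra is routine once it is in place.
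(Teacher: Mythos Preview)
Your approach is the same as the paper's: diagonalize $\a=\u\Lambda\u^{T}$ via the spectral theorem, reduce to the scalar condition $1-(1-\lambda_i)^n\neq 0$, and conclude. The paper stops there, asserting that positivity of the $\lambda_i$ suffices to make these quantities nonzero; you go further and actually check the scalar condition case by case.

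In doing so you have caught something the paper's proof glosses over: the lemma as stated is not true in full generality. The case $n=0$ gives the zero matrix, and, as you observe, $\a=2\i$ with $n=2$ yields $\i-(\i-2\i)^2=\mathbf 0$. The paper's sentence ``since $\Lambda$ is with only positive values on diagonal, eigenvalues of $\i-(\i-\a)^n$ are non-zeros'' is therefore a gap, not a proof. Your fix --- restricting to $n\ge 1$ and to eigenvalues of $\a$ in $(0,1]$, which is exactly what the step-size bound $\eta\le S_c/(2\lambda_{\max}(\h))$ guarantees for $\a=\tfrac{2\eta}{S_c}\h$ --- is the right statement for the application, and your argument for that regime (either via $|1-\lambda_i|<1$ or via the geometric-series factorization) is clean and complete. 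So: same route as the paper, but you have done the scalar step correctly where the paper did not.
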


\begin{proof}
Since $\a$ is positive definite and symmetric, it can be decomposed to $\u\Lambda\u^T$, where $\u\u^T = \i$ and $\Lambda$ is a diagonal matrix whose entries are the positive eigenvalues of $\a$. Thus we have
\begin{align*}
    \i-(\i-\a)^n &= \u\i\u^T - (\u\i\u^T - \u\Lambda\u^T)^n \\
    & =\u\i\u^T - \u(\i -\Lambda)^n\u^T\\
    &= \u (\i-(\i-\Lambda)^n)\u^T.
\end{align*}
Since $\Lambda$ is with only positive values on diagonal, eigenvalues of $\i-(\i-\a)^n$ are non-zeros. So  $\i-(\i-\a)^n$ is invertible and its inverse is $\u (\i-(\i-\Lambda)^n)^{-1}\u^T$.
\end{proof}

\subsection{Proof of Proposition~2}\label{app:proof_corollary}
\begin{proof}
When FedAvg is executed with full batch (i.e., $\ei = \mathbf{0}$), according to Eqs.~\ref{eq:gradient_sim_e} and~\ref{eq:ols}, we have
\begin{align}
        \gradient(\model) =  \nabla \locallossfunction(\model)= \frac{2}{S_c}\left( \h \model - \h\localmodel \right) \nonumber \\ 
          \model_c^t(0) =\w^{-1}\left(  \model_c^t(0)-\model_c^t(K)\right) + \localmodel. \label{eq:ols_no_noise}
\end{align}

Then once $n_c>d$, by solving $d$ multivariate linear equations (Eq.~\ref{eq:ols_no_noise}), we obtain the exact optimal local model $\localmodel$. 
\end{proof}

\subsection{Proof of Proposition~3}\label{app:proof_proposition}

\begin{proof}
To prove the lower bound for the communications,  we construct a specific ``hard" scenario. This scenario is inspired by the lower bound for the convergence of gradient methods minimizing smooth convex functions~\cite[Sect.~2.1.2]{nesterov2003introductory}.   

Let $\cc$ be the client having the local dataset ($\x$, $\y$) such that $\h_\cc= \x^T\x$ is a tridiagonal matrix with $\h_\cc[i,i+1] = \h_\cc[i+1,i] = -1/2$ and $\h_\cc[i,i] = 1, \forall i\in \{1,2,...,d\}$, and $\x^T\y=[1/2,0,0,...,0]^T$.\footnote{One example for the local data ($\x$, $\y$) satisfying the condition: $\x\in\real^{d\times d}$ with $\x[1, 1] = 1; \x[i+1,i+1] =\sqrt{1-\frac{1}{4\x[i, i]^2}}$ and $\x[i+1, i]=\frac{-1}{2\x[i, i]}, \forall i\in \{1, ..., d-1\}$; $\x[i,i+1]=\x[i, j]=0, \forall |i-j|\geq 1$. $\y\in\real^{d}$ with $\y[1] = \frac{1}{2}$ and $\y[i+1]= \frac{-\x[i+1, i]\y[i]}{\x[i+1, i+1]},  \forall i\in \{1, ..., d-1\}$.} We have
\begin{eqnarray}
\locallossfunctioncc(\model) &=& \frac{1}{S_c} \left(\model^T\h_\cc\model-2(\x^T\y)^T\model - \y^T\y \right) \nonumber\\
&=& \frac{1}{S_c} \left(\sum_{i=1}^{d} \model_{[i]}^2 - \sum_{i=1}^{d-1} \model_{[i]}\model_{[i+1]} -\model_{[1]} - C \right) \label{eq:loss_function_design}
\end{eqnarray}
where $C=\y^T\y$ is a constant and
$\localmodelcc =\arg\min \locallossfunctioncc(\model) =  (\h_\cc)^{-1}\x^T\y = [1-\frac{1}{d+1}, 1-\frac{2}{d+1}, ..., 1-\frac{d}{d+1}]^T$~\cite{nesterov2003introductory}. According to~\eqref{eq:loss_function_design}, we know that if $\model_{[i]}=\model_{[i+1]}=\ldots=\model_{[d]}=0$, then
$\nabla\locallossfunctioncc (\model)_{[i+1]}=\ldots=\nabla\locallossfunctioncc (\model)_{[d]} = 0$.

At the same time, for any other client $\forall \bar c\in \clients \setminus \{\cc\}$, we assume their local optimum are zeros where their local dataset $\mathbf{X}_{\bar \cc}=\mathbf{I}$ and $\mathbf{y}_{\bar \cc}  = \mathbf{0}$.
In this setting, we know that if the $i^{th}$ element of the global model is zero, i.e.,~$\model(t)_{[i]}=0$, then $\model^t_{\bar c}(E)_{[i]}=0$. 

Let $n = |\clients|$ be the number of clients and assume that every client has $S_c$ local data samples. Then the global empirical risk and its gradients have the following expressions: 
\begin{align}
\globallossfunction(\model) &= \frac{1}{nS_c} \left(\sum_{i=1}^{d} n\times \model_{[i]}^2 - \sum_{i=1}^{d-1} \model_{[i]}\model_{[i+1]} -\model_{[1]} - C' \right), \label{eq:global_function_design} \\
\nabla \globallossfunction(\model)_{[1]} &=  
\frac{1}{nS_c}(2n\times \model_{[1]} - \model_{[2]} - 1),\label{eq:gradient1}\\
\nabla \globallossfunction(\model)_{[i]} &=\frac{1}{nS_c} \left( 2n\times \model_{[i]} - \model_{[i-1]} - \model_{[i+1]} \right), \forall i\in\{2,...,d-1\},\label{eq:gradient2} \\
\nabla \globallossfunction(\model)_{[d]} &= \frac{1}{nS_c} \left( 2n\times \model_{[d]} - \model_{[d-1]} \right) \label{eq:gradient3},
\end{align}
where $C'$ is a constant. According to~\eqref{eq:gradient1},~\eqref{eq:gradient2} and~\eqref{eq:gradient3}, the global optimum $\model^*$ satisfies:
\begin{eqnarray}
\model^*_{[1]} &=& (1+\model^*_{[2]})/2n, \label{eq:optimum_1}\\ 
\model^*_{[i]} &=& 2n\model^*_{[i+1]} - \model^*_{[i+2]}, \forall i\in\{1,...,d-2\},\label{eq:optimum_2}\\
\model^*_{[d-1]}  &=& 2n\model^*_{[d]}.\label{eq:optimum_3}
\end{eqnarray}
Equations~\ref{eq:optimum_2} and~\ref{eq:optimum_3} show that $\model^*_{[i]}$ is proportional to $\model^*_{[d]}$ for every $i\in\{1,...,d-1\}$, i.e.,~$\model^*_{[i]} = k_i\times \model^*_{[d]}$ where $k_i>0$. 
Since $\model^*_{[1]} = k_1\times \model^*_{[d]}$ and $\model^*_{[2]} = k_2\times \model^*_{[d]}$ where $k_1>0$ and $k_2>0$, by substituting $\model^*_{[1]}$ and  $\model^*_{[2]}$ into~\eqref{eq:optimum_1}, we can see that $\model^*_{[d]} \not = 0$.
Therefore, we can prove then that every element of the global optimum is non-zero, i.e.,~$\model^*_{[i]} \not = 0, \forall i\in \{1,..,d\}$.

Now, suppose that we run the FedAvg with one local step under the above scenario, with initial global model $\model(0) = \mathbf{0}$, i.e.,~$\model^0_c(0) = \mathbf{0}, \forall c\in \clients$. 
According to the previous analysis, we know that~$\forall t\in \{0,1...,d-1\}$:
\begin{eqnarray}
\model^t_\cc(E)_{[t+1]} = \model^t_\cc(E)_{[t+2]} = \ldots = \model^t_\cc(E)_{[d]} = 0,
\end{eqnarray}
and
\begin{eqnarray}
\model(t)_{[t+1]} = \model(t)_{[t+2]} = \ldots = \model(t)_{[d]} = 0.
\end{eqnarray}
Therefore, to reach the non-zeros global optimum, the client $\cc$ needs to communicate with the server (be selected by the server) at least $d$ times. 

Moreover, to recover the local optimum of client $\cc$, the adversary must listen on the communication channel for at least $d$ times.  
In fact, suppose that the client $\cc$ holds another local data set which gives $\h_{\cc}'$ equals to $\h_{\cc}$ but for the last row and the last column that are zeros. Under this case, the adversary will have the same observation under $\h_{\cc}$ and under $\h_{\cc}'$ till round $d-1$.  
\label{proof_com}
\end{proof}

\section{The details of experimental setups}
\subsection{Client's update rule}\label{app:client_update}
In all our experiments we assume each client follows FedAvg update rule, performing locally multiple stochastic gradient steps. The detailed update rule is in Alg~\ref{algo:fedavg}. 
\begin{algorithm}[!tbh]
\caption{Client $c$'s local update rule ($\localupdaterule^c(\model^t, \localdataset)$) in FedAvg~\cite{mcmahan2017communication}}\label{algo:fedavg}
\textbf{Input}: server model $\model^t$, local dataset $\localdataset$, batch size $B$, number of local epochs $E$, learning rate~$\eta$.
\begin{algorithmic}[1]
        \STATE{$\model_c^t(0) \leftarrow \model^t$, $\mathcal{B} \leftarrow$ (split $\localdataset$ into batches of size $B$), $k\leftarrow 0$}
        \FOR{each local epoch $e$ from 1 to $E$}
          \FOR{batch $b \in \mathcal{B}$}
                \STATE{$\model_c^t(k+1) \leftarrow \model_c^t(k) - \eta \times \gradient (\model_c^t(k),b)$, \,\,\,
              where $\gradient(\model_c^t(k), b) = \frac{1}{B} \sum_{x\in b} \nabla \ell(\model_c^t(k), x)$} \label{algoline:gradient_update}
                \STATE{$k\leftarrow k+1$}
            \ENDFOR
        \ENDFOR
    \STATE{Return $\model_c^t(K)$, where $K=E\lceil\frac{S_c}{B}\rceil$}
\end{algorithmic}

\end{algorithm}
\subsection{Experiment in Figure~\ref{fig:gradient_vs_model}}\label{app:figure1}
We evaluate different AIAs when four clients train a neural network (a single hidden layer of 128 neurons using ReLU as activation function) through FedAvg with 1 local epoch and batch size 32. Each client stores $S_c=|\localdataset|$ data points randomly sampled from ACS Income dataset~\cite{income_dataset}. This dataset contains census information from 50 U.S. states and Puerto Rico, spanning from 2014 to 2018. It includes 15 features related to demographic information such as age, occupation, and education level. 
The adversary infers the gender attribute of every data sample held by the client given access to the released (public) information.

The \emph{regression} task is to predict an individual's income. The \emph{classification} task is to predict whether an individual has an income higher than the median income of the whole dataset, i.e., 39K$\$$.
To train the classification task, for each sample $i$ with target income $y_i$, the label is given by $\mathds{1}_{y_i>39K}$. 
\subsection{Toy dataset setting}\label{app:toydataset}
We test our proposed passive LMRA~(Alg.~\ref{algo:lineardecode}) on a toy federated least squares regression with two clients. Each client $c$ has 1024 samples with 10 features, where nine numerical features are sampled from a uniform distribution over $[0,1)$ and one binary feature is sampled from a Bernoulli distribution with $p=1/2$. The prediction $\mathbf{y}$ is generated from the regression model $\mathbf{y} = \x\model^*+\mathbf{\epsilon}$ where $\mathbf{\epsilon}$ is drawn from $\mathcal{N}(0, 0.1)^{S_c}$ and the optimal local model $\model^*\in \real^{d}$ is drawn from the standard normal distribution where $d=11$. 
The training is run by FedAvg with 1 local epoch, batch size $64$, $256$, and $1024$, and 5 seeds each, respectively. The number of communication rounds are 300 and  the honest-but-curious adversary only eavesdropped $d+1=12$ messages from rounds $\mathcal{T}=\{i*20 | i\in \{0,1,...,11\}\}$.

Note that, for the full batch scenario (i.e.,~$B=S_c$), numerical inversion errors occurred in line~\ref{algoline:ols} may prevent the exact computation of the local model, particularly when $\Theta_{\text{out}}$ is ill-conditioned. That explains why in Figure~\ref{fig:linear_toy}, we have $||\hat\model^*-\model^*||_2$ close to but not exactly equal to zero when $B=S_c$.\footnote{In real practice, the adversary can optimize the set of the messages considered to minimize the condition number of $\Theta_{\text{out}}$.}

\subsection{Data splitting strategy for Income-L dataset }\label{app:datasplit_incomel}
For Income-L dataset, we apply a splitting strategy to control statistical heterogeneity among the 10 clients, which is detailed in Alg.~\ref{algo:Income-L}.

To achieve this, we first partition the initial dataset into two clusters, $\mathcal{D}_{h}$ and $\mathcal{D}_{l}$, which exhibit strong opposing correlations between the sensitive attribute and the target income. More precisely, $\mathcal{D}_h$ contains samples of rich men and poor women and $\mathcal{D}_{l}$ contains samples of poor men and rich women (lines~2-3).  
We then randomly select $ \min(|\mathcal{D}_{h}|,|\mathcal{D}_{l}|)$ samples from each cluster to have balanced size, denoted by $\mathcal{D}'_h$ and $\mathcal{D}'_h$ (lines~5-6). 
Initially, $\mathcal{D}'_h$ and $\mathcal{D}'_l$ have clearly different distributions. By randomly swapping a fraction of $0.5-h$ of samples between the two clusters (lines~7-8),  the distributions of $\mathcal{D}'_h$ and $\mathcal{D}'_l$ become more similar as $h$ decreases. Lastly, each cluster is divided equally into datasets for five clients (lines~11-12).


\begin{algorithm}[t]
\caption{Splitting strategy for Income-L dataset with heterogeneity level $h \in [0, 0.5]$}\label{algo:Income-L}
 \textbf{Input}: the initial Income-L dataset $\mathcal{D}=\{(\mathbf{x}^p(i),s(i), y(i)), i=1,..., |\mathcal{D}|\}$
\begin{algorithmic}[1]
    \STATE $\mathrm{med} \leftarrow$ median value in $\{y(i), i=1,..., |\mathcal{D}|\}$
    \STATE Let $\mathcal{D}_{h}= \{((\mathbf{x}^p(i),s(i), y(i)) \in \mathcal{D}\,\,| ( s(i) = \mathrm{man} \land y(i) > \mathrm{med} )\lor ( s(i) = \mathrm{woman} \land y(i) <= \mathrm{med}) \}$
    \STATE Let $\mathcal{D}_{l} = \mathcal{D} \setminus \mathcal{D}_{h}$
    \STATE $k \leftarrow \min(|\mathcal{D}_{h}|,|\mathcal{D}_{l}|)$
    \STATE $\mathcal{D}_{h}' \leftarrow$ sample randomly $k$ points from $\mathcal{D}_{h}$
    \STATE $\mathcal{D}_{l}' \leftarrow$ sample randomly $k$ points from $\mathcal{D}_{l}$
    \STATE $\mathcal{D}_{hs}' \leftarrow$ sample randomly $(0.5-h)k$ points from $\mathcal{D}_{h}'$
    \STATE $\mathcal{D}_{ls}' \leftarrow$ sample randomly $(0.5-h)k$ points from $\mathcal{D}_{l}'$
    \STATE $\mathcal{D}_{h}'\leftarrow (\mathcal{D}_{h}' \setminus \mathcal{D}_{hs}') \cup \mathcal{D}_{ls}'$
    \STATE $\mathcal{D}_{l}'\leftarrow$ $(\mathcal{D}_{l}' \setminus \mathcal{D}_{ls}') \cup \mathcal{D}_{hs}'$
    
    \STATE Split $\mathcal{D}_{l}'$  equally among the first 5 clients 
    \STATE Split  $\mathcal{D}_{h}'$ equally among the last 5 clients 

\end{algorithmic}
\end{algorithm}

\subsection{Hyperparameters}\label{app:hyperparameters}
\paragraph{Learning rates for FL training on neural network}
In the experiments on Income-L, clients train their model using varying batch sizes from the set $\mathcal{B} = \{32, 64, 128, 256, 512, 1024\}$. The learning rates are tuned in the range $[1\cdot10^{-7}, 5\cdot 10^{-6}]$, according to the batch size: $5 \cdot 10^{-7}$ for batch sizes of 32 and 64, $1 \cdot 10^{-6}$ for a batch size of 128, $2 \cdot 10^{-6}$ for batch sizes of 256 and 512, and $3 \cdot 10^{-6}$ for a batch size of 1024. We keep the same learning rates when varying the number of local epochs and the data heterogeneity level.
For Income-A,  the learning rate is set to $1\cdot 10^{-6}$.
For Medical, the learning rate is set to $2\cdot10^{-6}$. \\
\paragraph{Hyperparamters for Gradient-based attacks}

 For the passive adversary, the set of inspected messages $\mathcal T$ is selected in  $T^p = \{ \{\text{first } \max\{1, \lfloor f |\mathcal{T}_c| \rfloor\}$  rounds in  $\mathcal{T}_c\}$, for  $f\in \mathcal{F}\}$. 
For the active adversary, $\mathcal T$ is selected in  $T^p \cup \{ \{\text{first } \max\{1, \lfloor f |\mathcal{T}^a_c|\rfloor\}  \text{ rounds in } \mathcal{T}^a_c\}, \text{ for } f\in \mathcal{F}\}$. We set $\mathcal{F} = \{ 0.01, 0.05, 0.1, 0.2, 0.5, 1\}$

To solve~\eqref{eq:aia_gradient_based} with the Gumbel-softmax trick, we set the Gumbel-softmax temperature to 1.0 and use a SGD optimizer with a learning rate tuned from the set $\{10^n, n= 2,3,\dots,6\}$ for every attack.

\paragraph{Hyperparameters for our active attack}
 
In our attack with an active adversary, detailed in Alg.~\ref{algo:active}, Adams' hyperparameters are selected following a tuning process performed using Optuna \citep{optuna} hyperparameter optimization framework. In all the experiments, values for the learning rate, $\beta_1$ and $\beta_2$ are optimized to minimize each client's training loss. Learning rate is tuned in range $[0.1, 50]$, whereas $\beta_1$ and $\beta_2$ values are tested in range $[0.6, 0.999]$. The optimal hyperparameters' set is determined after 50 trials.



\section{Additional experimental results}
\subsection{Federated least squares regression}\label{app:linear_model}
Here, we show the results of AIAs for federated least squares regression task, on Income-L, Income-A and Medical datasets. In all the experiments, each client trains a linear model for 300 communication rounds with 1 local epoch and batch size 32. The learning rate is set to $5\cdot10^{-3}$.  The passive adversary may eavesdrop all the exchanged messages until the end of the training. The active adversary launches the attack after 300 rounds for additional 10 and 50 rounds. All attacks are targeted at a randomly chosen single client.
To approximate the optimal local model of the targeted client, our passive adversary applies Alg.~\ref{algo:lineardecode} and  our active adversary applies Alg.~\ref{algo:active}.

To reduce the task difficulty by a linear model, we shrink the feature space of Income-L and Income-A. More precisely, we remove ‘Occupation', ‘Relationship', and ‘Place of Birth' features, transform ‘Race Code' and ‘Marital Status' from categorical to binary features (i.e., White/Others and Married/Not married), and reduce the cardinality of ‘Class of Worker' to 3 groups (Public employess/Private employees/Others). 

\paragraph{Hyperparameters}
We optimize the set of messages for both our method and gradient-based passive attacks. 
For gradient-based attacks,  the set of inspected messages $\mathcal{T}$ is selected in $T^p = \{ \{\text{first t}$  rounds in  $\mathcal{T}_c\}$, for  $t\in \{1,5,10,20,50,100,150,300\}\}$. 
For our attack, we explore a much larger number of sets of messages as our attack is computationally-light, evolving only simple matrix computation. In particular, we randomly sample $10^7$ sets of size $d+1$ and choose the one which minimizes the condition number of $\Theta_{\text{out}}$, since an ill-conditioned matrix  $\Theta_{\text{out}}$ leads to a high numerical inversion error as mentioned in App.~\ref{app:toydataset}.  All other hyperparameters are consistent with those used in the neural network experiments.

\begin{table}[ht]
    \centering
    \begin{tabular}{|c|c|c|c|c|}
        \hline
         \multicolumn{2}{|c|}{\bfseries \backslashbox{AIA (\%)}{Datesets}}  & Income-L  &Income-A  &Medical \\
         \hhline{|=|=|=|=|=|}
    \multirow{3}{*}{\bfseries Passive}& Grad&53.10\tiny{$\pm$1.40} &49.74\tiny{$\pm$3.17} &87.76\tiny{$\pm$3.80} \\ 
    & Grad-w-O &58.19\tiny{$\pm$0.41}  &55.97\tiny{$\pm$0.38} &\bf{94.68}\tiny{$\pm$0.23} \\ 
    &Ours &\bf{59.05}\tiny{$\pm$0.40} &\bf{56.56}\tiny{$\pm$0.41} &94.13\tiny{$\pm$0.16}\\
    \hhline{|=|=|=|=|=|}
    \multirow{3}{*}{\shortstack[c]{\bfseries Active \\(10 Rnds)}}& Grad&53.10\tiny{$\pm$1.40} &49.74\tiny{$\pm$3.17} &87.76\tiny{$\pm$3.80} \\ 
       & Grad-w-O &\bf{59.06}\tiny{$\pm$0.07} &\bf{57.18}\tiny{$\pm$0.39} &\bf{94.68}\tiny{$\pm$0.23} \\ 
    & Ours&57.99\tiny{$\pm$1.71} &56.44\tiny{$\pm$0.12} & 90.47\tiny{$\pm$5.12}\\
    \hline
    \multirow{3}{*}{\shortstack[c]{\bfseries Active \\(50 Rnds)}}& Grad&53.10\tiny{$\pm$1.40} &49.74\tiny{$\pm$3.17} &87.76\tiny{$\pm$3.80} \\ 
     & Grad-w-O &\bf{60.69}\tiny{$\pm$0.30} &\bf{57.18}\tiny{$\pm$0.39} &\bf{94.68}\tiny{$\pm$0.23} \\ 
    & Ours&59.10\tiny{$\pm$0.16} &56.22\tiny{$\pm$0.06} &93.91\tiny{$\pm$0.08}\\
    \hhline{|=|=|=|=|=|}
    \multicolumn{2}{|c|}{\bfseries Model-w-O}& 59.10\tiny{$\pm$0.16} &56.56\tiny{$\pm$0.41} &94.13\tiny{$\pm$0.16} \\ 
    \hline 
    \end{tabular}
    \caption{The AIA accuracy over one (random chosen) targeted client local dataset, evaluated under both honest-but-curious (passive) and malicious (active) adversaries across Income L, Income-A and Medical FL datasets. The standard deviation is evaluated over three FL training runs. All clients solve a \emph{least squares regression} problem running FedAvg with 1 epoch and batch size 32. For income-L, we consider an i.i.d setting.}
    \label{tab:attack_linear}
\end{table}

Results in Table~\ref{tab:attack_linear} show that our attacks outperform the baseline Grad in both passive and active scenarios on across all three datasets. Notably, our passive attack achieves improvements of over 4, 5 and 7 percentage points (p.p.) for Income-L, Income-A and Medical datasets, respectively. Even when the gradient-based method has access to an oracle, our passive attacks still achieves higher accuracy on Income-A and Income-L and comes very close on Medical dataset. 
More importantly, our passive attack reaches already  to the performance expected from an adversary who knows the optimal local model,  demonstrating the effectiveness of our passive approach in approximating the optimal local model (Alg.~\ref{algo:lineardecode}).
When shifting to active attacks,  the performance gain over Grad remains largely consistent after 50 active rounds.

\subsection{Impact of the starting round for active attack}
\label{app:starting_points}
In our implementations, with a local epoch of $1$, all active attacks were initiated from the $100^{\text{th}}$ communication round. Here, we vary the starting round of the active attacks and run the attacks for additional 50 rounds.  We illustrate the attacks' performance in Figure~\ref{fig:attack_starting_point_effect}. 
We observe that our approach shows slight improvements when the attack is launched in the later phases of training. However, the gradient-based method Grad is more effective during the early training phase. Overall, our approach maintains an advantage of over 12 percentage points.  
\begin{figure}
    \centering
    \includegraphics[scale=0.3]{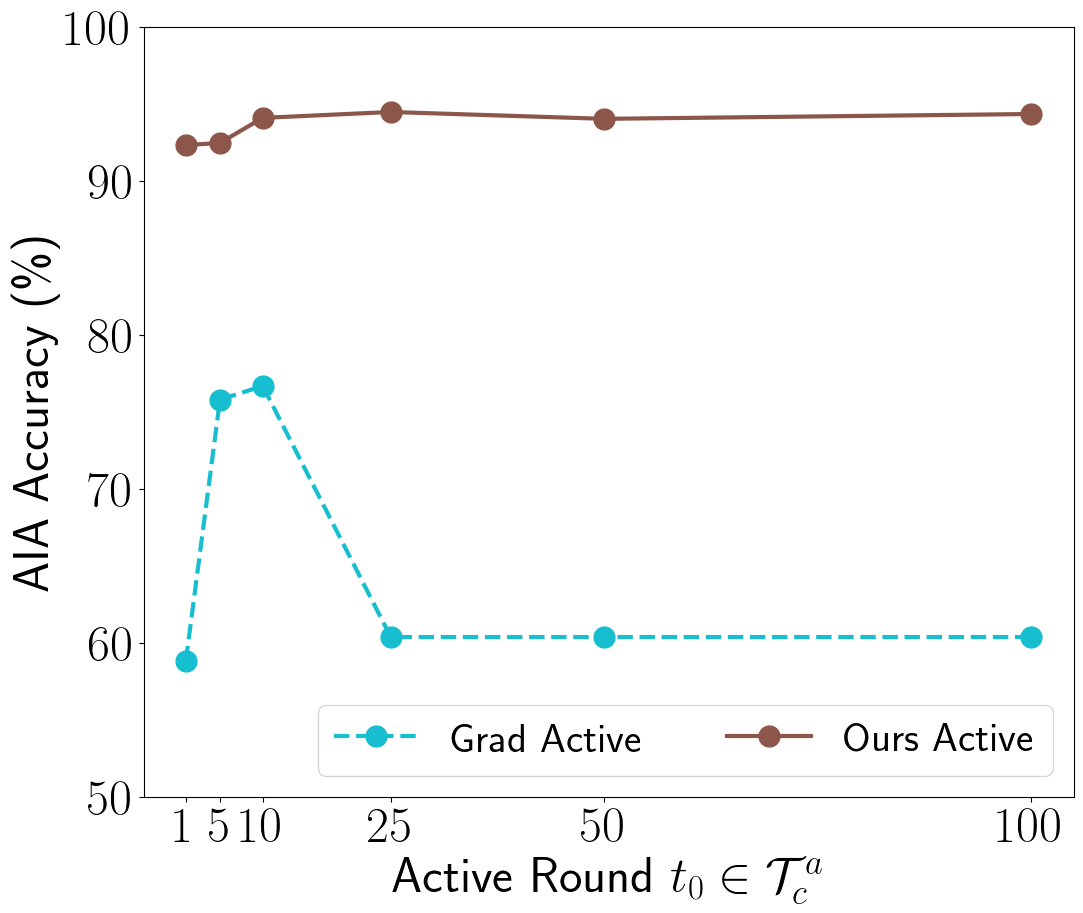}
    \caption{The AIA accuracy over all clients' local datasets under different starting points of active attack for Income-L dataset ($40\%$ heterogeneity level). The clients train a neural network through FedAvg with 1 local epoch and batch size 32.}
    \label{fig:attack_starting_point_effect}
\end{figure}

\subsection{Impact of the number of active rounds} 
\label{app:impact_active_rounds}
We evaluate the performance of active attacks under different numbers of active rounds $|\mathcal{T}^a_c|$ (Figure~\ref{fig:attack_round_effect}), during which an active adversary launches attacks.  Our attack becomes more powerful as the number of active rounds increases, whereas Grad does not demonstrate the same level of effectiveness.

\begin{figure}
    \centering
    \includegraphics[scale=0.3]{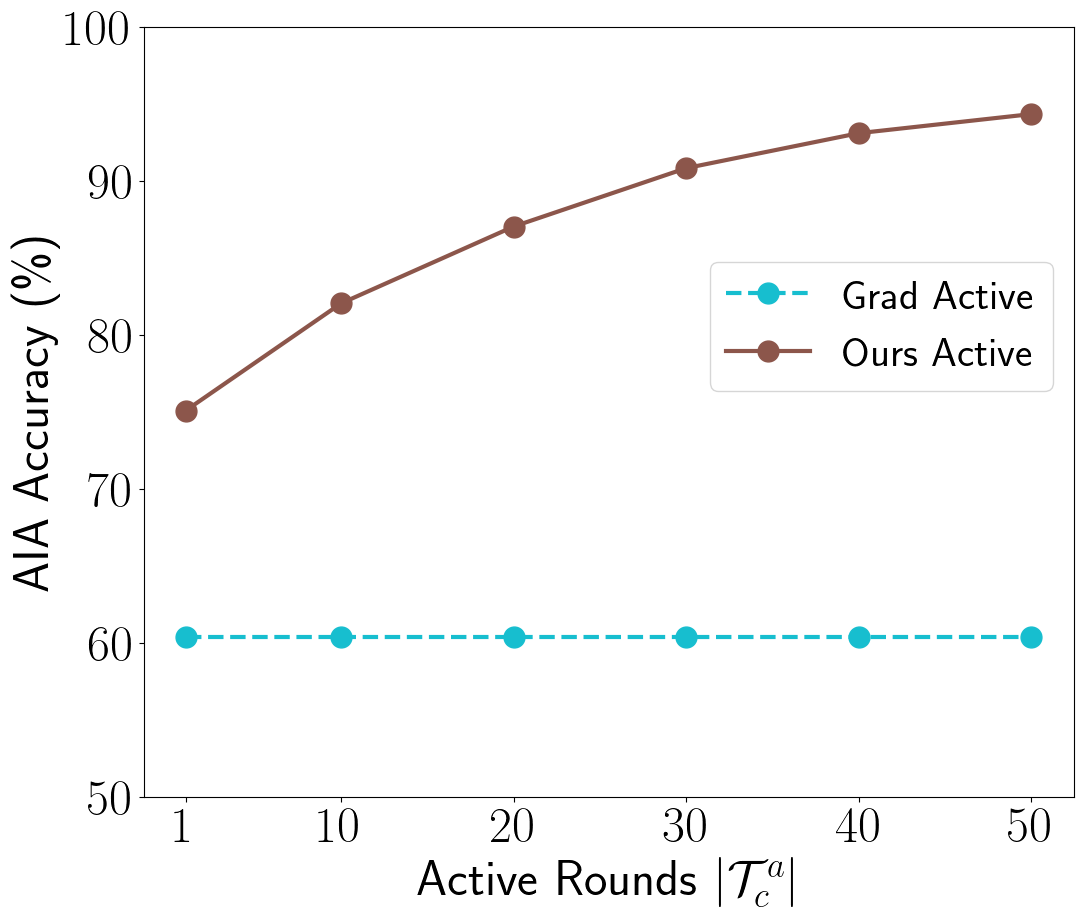}
    \caption{The AIA accuracy over all clients' local datasets under different numbers of active rounds for Income-L dataset ($40\%$ heterogeneity level). The clients train a neural network through FedAvg with 1 local epoch and batch size 32.}
    \label{fig:attack_round_effect}
\end{figure}

\subsection{Defense}
\label{app:defense}
To mitigate privacy leakage, we apply a federated version of DP-SGD~\cite{DP}, which provides ($\epsilon, \delta$) sample-level differential privacy guarantees~\cite{Dwork, slDP}. More precisely, the clients clip and add Gaussian noises to their gradients in FL.  We used Opacus~\citep{Opacus} to incorporate a $(1,1\cdot10^{-5})$-differentially private defense on Income-L and Medical datasets, and $(1,1\cdot10^{-6})$-differentially private defense on Income-A.
All attacks are targeted at a randomly chosen single client. Other experimental settings are consistent with those used in the neural network experiments. 

\paragraph{Hyperparameters for defense}
We adjust the clipping norm for each dataset and select the one that yields the lowest validation loss in the final global model.
For Income-L and Income-A datasets, the clipping norm is tuned over the set $\{1\cdot 10^6, 3\cdot 10^6, 5 \cdot 10^6, 7\cdot 10^6, 9\cdot 10^6\, 1\cdot 10^7, 3\cdot 10^7, 5 \cdot 10^7\}$. For Medical dataset,  the clipping norm is tuned over the set $\{5 \cdot 10^5, 7\cdot 10^5, 9\cdot 10^5, 1\cdot 10^6, 3\cdot 10^6, 5 \cdot 10^6, 7\cdot 10^6, 9\cdot 10^6 \}$. \\


\begin{table}[ht]
    \centering
    \begin{tabular}{|c|c|c|c|c|}
        \hline
         \multicolumn{2}{|c|}{\bfseries \backslashbox{AIA (\%)}{Datasets}}  & Income-L  &Income-A  &Medical \\
         \hhline{|=|=|=|=|=|}
    \multirow{3}{*}{\bfseries Passive}& Grad&59.34\tiny{$\pm3.58$} &50.52\tiny{$\pm2.49$} &63.51\tiny{$\pm6.63$} \\ 
    & Grad-w-O &\bf{77.67}\tiny{$\pm1.01$} &53.83\tiny{$\pm0.19$} &91.09\tiny{$\pm0.08$} \\ 
    &Ours &48.69\tiny{$\pm4.03$}  &\bf{58.08}\tiny{$\pm0.11$}  & \bf{94.19}\tiny{$\pm0.23$}\\
    \hhline{|=|=|=|=|=|}
    \multirow{3}{*}{\shortstack[c]{\bfseries Active \\(10 Rnds)}}& Grad&59.34\tiny{$\pm3.58$}  &50.52\tiny{$\pm2.49$} &63.51\tiny{$\pm6.63$} \\ 
       & Grad-w-O &\bf{77.67}\tiny{$\pm1.01$} &54.42\tiny{$\pm2.77$} &91.09\tiny{$\pm0.08$} \\ 
    & Ours&62.04\tiny{$\pm$3.16} &\bf{57.60}\tiny{$\pm0.85$} &\bf{93.96}\tiny{$\pm0.41$}\\
    \hline
    \multirow{3}{*}{\shortstack[c]{\bfseries Active \\(50 Rnds)}}& Grad&59.34\tiny{$\pm3.58$}  &50.52\tiny{$\pm2.49$} &63.51\tiny{$\pm6.63$} \\ 
     & Grad-w-O &\bf{77.67}\tiny{$\pm1.01$}  &54.42\tiny{$\pm2.77$} &91.09\tiny{$\pm0.08$} \\ 
    & Ours&71.37\tiny{$\pm1.86$} &\bf{57.25}\tiny{$\pm0.86$} &\bf{94.30}\tiny{$\pm0.08$} \\
    \hhline{|=|=|=|=|=|}
    \multicolumn{2}{|c|}{\bfseries Model-w-O}&79.33\tiny{$\pm1.11$} &58.53\tiny{$\pm0.61$} &94.30\tiny{$\pm0.08$} \\ 
    \hline 
    \end{tabular}
    \caption{The AIA accuracy over one (random chosen) targeted client's local dataset evaluated under both honest-but-curious (passive) and malicious (active) adversaries across Income L, Income-A and Medical FL datasets. The standard deviation is evaluated over three FL training runs. All clients train a neural network through a federated version of DP-SGD with 1 local epoch and batch size 32, providing ($\epsilon=1$, $\delta=1\cdot 10^{-5}$) sample level differential privacy for every client on Medical and Income-L, and ($\epsilon=1$, $\delta=1\cdot 10^{-6}$) sample level differential privacy on Income-A.}
    \label{tab:dp_32}
\end{table}

Table~\ref{tab:dp_32} presents the results with both active and honest-but-curious adversaries. 
Our passive attacks significantly outperform Grad baselines on Income-A and Medical datasets, achieving over 30 and 8 percentage point improvements on Income-A and Medical datasets, respectively. Furthermore, our attacks improve also Grad-w-O baseline by 4 percentage points on Income-A and 3 percentage points on Medical. Surprisingly, passive gradient-based attacks demonstrate better performance than Ours on Income-L, despite having near-zero cosine similarity. This observation suggest that cosine similarity may not be a suitable metric for gradient-based attacks' optimization. Moving to the active adversary, our attacks exhibit minimal performance changes between rounds 10 and 50 on Income-A and Medical datasets, as they approach the empirical optimal performance achievable by Model-w-O. Conversely, on Income-L, where performance margins are larger, our attack surpass Grad by 3 and 12 percentage points after 10 and 50 active rounds, respectively, with potential for further optimization to match the optimal local model performance.

\end{document}